\documentclass[12pt]{article}
\usepackage[margin=1in]{geometry}

\author{Paul Christiano \\ UC Berkeley \\ \texttt{paulfchristiano@eecs.berkeley.edu}}
\title{Collaborative Prediction with Expert Advice}

\usepackage{amsfonts}
\usepackage{amssymb}
\usepackage{amsmath}
\usepackage{latexsym}
\usepackage{mathtools}
\usepackage{enumerate}
\usepackage{colonequals}

\usepackage{amsthm}

\newtheorem{theorem}{Theorem}
\newtheorem*{theorem*}{Theorem}
\newtheorem{corollary}{Corollary}
\newtheorem*{corollary*}{Corollary}
\newtheorem{lemma}{Lemma}
\usepackage{algorithm2e}

\renewcommand{\epsilon}{\varepsilon}

\newcommand{\users}{\ensuremath{\mathcal{U}}}

\newcommand{\mwm}{\mathrm{MWM}}
\newcommand{\collabmwm}{\mathrm{MWM}_{\users}}
\newcommand{\Omego}[1]{\Omega\of{#1}}
\newcommand{\omego}[1]{\omega\of{#1}}
\newcommand{\experts}{\ensuremath{\mathcal{X}}}

\newcommand{\entropy}[1]{H_b\of{#1}}

\newcommand{\OPTT}{\mathrm{OPT}_{\leq T}}
\newcommand{\OPTmH}{\mathrm{OPT}_{\leq T}^{H,m}}
\newcommand{\abs}[1]{\left|#1\right|}
\newcommand{\OPTH}{\OPTT^H}

\newcommand{\logo}[1]{\log{#1}}

\renewcommand{\O}[1]{\mathcal{O}\of{#1}}
\renewcommand{\o}[1]{o\of{#1}}
\newcommand{\Ot}[1]{\widetilde{\mathcal{O}}\of{#1}}
\renewcommand{\o}[1]{o\of{#1}}

\newcommand{\ps}[1]{\left(#1\right)}
\DeclarePairedDelimiter{\pdelims}{(}{)}
\newcommand{\floor}[1]{\left\lfloor#1\right\rfloor}
\newcommand{\of}[1]{\pdelims*{#1}}

\renewcommand{\b}[1]{\left\{#1\right\}}

\newcommand{\lossH}{\ell_{\leq T}^H}

\newcommand{\theregret}{ \Ot{\sqrt{N \of{V_H \alpha \log \frac 1{\alpha} + V_{\users \backslash H} \ps{1 - \alpha}\log \frac 1{1 - \alpha}}}}}

\newcommand{\Let}[2]{#1$\leftarrow$#2\;}

\begin{document}

\maketitle

\begin{abstract}
Many practical learning systems aggregate data across many users,
while learning theory traditionally considers a single
learner who trusts all of their observations.
A case in point is the foundational learning problem of prediction with expert advice.
To date, there has been no theoretical study
of the general \emph{collaborative} version of prediction with expert advice,
in which many users face a similar problem and would like to share
their experiences in order to learn faster.
A key issue in this collaborative 
framework is robustness: generally algorithms that aggregate data are vulnerable to 
manipulation by even a small number of dishonest users.

We exhibit the first robust collaborative algorithm for prediction with expert advice.
When all users are honest and have similar tastes our algorithm
matches the performance of pooling data and using a traditional algorithm.
But our algorithm also guarantees that adding users
never significantly degrades performance,
even if the additional users behave adversarially.
We achieve strong guarantees even when the overwhelming
majority of users behave adversarially.
As a special case, our algorithm is extremely robust to
variation amongst the users.
\end{abstract}

\section{Introduction}
Modern machine learning systems often aggregate data from many users
to make a range of significant decisions,
from product recommendations that shape what we buy
to search rankings that shape what we read.
Sharing data facilitates rapid learning,
but leaves these systems vulnerable to manipulation by malicious users.
We consider a formal model of robust collaborative algorithms,
which offer performance guarantees even if many users behave maliciously.

Users in our model face the traditional problem
of using advice from $M$ experts
to make a sequence of decisions or predictions.
Different users could solve their prediction problems independently:
by using standard techniques,
we could ensure that each user makes about $\O{\logo{M}}$ suboptimal predictions
before converging to the performance of the best single expert.
However, if some experts make good predictions for many users,
then those users should be able to share their data in order to learn faster.
Rather than having \emph{each user} make $\O{\logo{M}}$
bad predictions, \emph{all of the users together}
could make only $\O{\logo{M}}$ bad predictions.
If there are $\omego{\logo{M}}$ users, the typical user would make $\o{1}$ bad predictions.

Ideally we would have a collaborative algorithm
which exploits shared structure when it exists,
but which is robust to differences amongst users.
As an important extreme case,
we would like algorithms which achieve meaningful bounds even when
some users behave adversarially.
For example, many ``users'' of a search engine
may be manipulators, trying to promote their clients' web pages.
A naive application of traditional learning algorithms to the collaborative
setting would be vulnerable to manipulation
even when the number of manipulators is $\o{1}$.

To date, there has been no theoretical study
of the simplest collaborative version of prediction with expert advice.
In particular, no existing algorithms achieve fast convergence
while remaining robust to either inhomogeneous preferences or a small fraction
of malicious users.
We propose a new algorithm for this setting which meets both goals.
Our algorithm provides very strong guarantees
even when the vast majority of users are dishonest manipulators.

The starting point for our approach
is a novel reduction to the problem of learning from specialists,
experts who sometimes decline to offer advice \cite{specialists}.
This reduction requires
an expert for every subset of the users, yielding an exponential time algorithm.
Our key contribution is to ``reverse'' the learning problem,
having each expert learn which subset of the users it should offer advice to.
This idea leads to an efficient algorithm that enjoys the same guarantees.

\subsection{Our model}

We fix a set of users $\users$ and a set of experts $\experts$.
In each round $t = 1, 2, \ldots$ a single user $u_t$ must pick an expert $x_t \in \experts$
(their choice may be randomized).
After choosing, $u_t$ observes a vector of losses $\ell_t : \experts \rightarrow [-1, 1]$,
and receives the loss $\ell_t\of{x_t}$.
Finally, $u_t$ may post the vector $\ell_t$ to a public bulletin board
(a dishonest user may instead post an arbitrary vector).
The contents of the bulletin board are visible to other users in future rounds.

If a single expert $x_H$ predicts well for all of the users in some set $H$,
then the users in $H$ ought to be able to share their data
in order to identify $x_H$ more quickly.
The difficulty is that the set $H$ is not known in advance,
and so we cannot simply aggregate data from all users in $H$ without including
data from users who are malicious or for whom $x_H$ does not predict well.

To make the goal formal, let $H$ be an arbitrary set of users
who honestly report their payoffs.
Define the loss of $\lossH$ as the total loss in all rounds
involving a user in $H$:
\[\lossH = \sum_{t \leq T : u_t \in H} \ell_t\of{x_t}.\]

We compare this loss to the best performance that the users in $H$
could have achieved,
if they had chosen a single fixed expert:
\[\OPTH = \min_{x \in \experts} \sum_{t \leq T: u_t \in H} \ell_t\of{x}. \]
We are interested in bounding the \emph{regret} $\lossH - \OPTH$.

We write $N = \abs{\users}$ and $M = \abs{\experts}$.
$T$ is the total number of rounds, which we do not assume is known in advance.
We write $\Ot{\cdot}$ to hide additive terms of $\O{\sqrt{T \log\log T}}$.
These terms do not affect the asymptotics unless $T > 2^{M}$
which is not a parameter regime we are interested in.

We define and analyze an algorithm $\collabmwm$.
A more precise regret bound is given in Theorem~\ref{fullstatement} in Section~\ref{alltogether},
but the following simple corollary captures the basic behavior:
\begin{corollary}\label{simplestatement}
Let $H$ be any set of users. 
Then $\collabmwm$ satisfies
\[ \lossH \leq \OPTH + \O{\sqrt{T \ps{\log{M} + N}}}. \]
If $\abs{H} = \alpha N$ and $u_t \in H$ in an $\alpha$ fraction of rounds,
then we have the tighter bound:
\[ \lossH \leq \OPTH + \Ot{\sqrt{\alpha T \ps{\log M + N \entropy{\alpha}}}} \]
where $\entropy{\alpha} = \alpha \log \frac 1{\alpha} +  (1 - \alpha) \log \frac 1{1 - \alpha}$
is the binary entropy.
\end{corollary}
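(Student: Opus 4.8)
The plan is to read both inequalities off the main theorem (Theorem~\ref{fullstatement}) by specializing it to one comparator built from the honest set $H$. Recall the reduction sketched in the overview: there is one specialist $e_S$ for each subset $S \subseteq \users$, where $e_S$ is awake exactly on the rounds $t$ with $u_t \in S$ and, while awake, runs an ordinary prediction-with-expert-advice learner over $\experts$ on the substream of its own awake rounds. The algorithm $\collabmwm$ aggregates these specialists with a sleeping-experts form of multiplicative weights under a prior $p$ over subsets, so its guarantee splits into a meta-level regret against a chosen specialist plus the inner regret of that specialist.

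First I would fix the comparator $S = H$, let $x_H \in \experts$ minimize $\sum_{t \leq T : u_t \in H} \ell_t\of{x}$, and write $T_H$ for the number of rounds with $u_t \in H$. Two bounds compose. The meta-level specialists guarantee bounds $\lossH$, the algorithm's loss on the rounds with $u_t \in H$, by the loss of $e_H$ on those same rounds plus $\O{\sqrt{T_H \ps{-\log p\of{H}}}}$. The inner learner bounds the loss of $e_H$ on its awake rounds by $\OPTH + \O{\sqrt{T_H \logo{M}}}$, since following $x_H$ on those rounds costs exactly $\OPTH$. Combining the two via $\sqrt{a} + \sqrt{b} \leq \sqrt{2\ps{a + b}}$ gives, for any fixed prior $p$,
\[ \lossH \leq \OPTH + \O{\sqrt{T_H \ps{\logo{M} - \log p\of{H}}}}. \]

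The two displayed statements now correspond to two priors. For the coarse bound I take $p$ uniform over the $2^N$ subsets, so $-\log p\of{H} = \O{N}$, and $T_H \leq T$ yields $\lossH \leq \OPTH + \O{\sqrt{T\ps{\logo{M} + N}}}$. For the refined bound the natural choice is the product prior that includes each user independently with probability $\alpha$: then $-\log p\of{H} = N\entropy{\alpha}$ whenever $\abs{H} = \alpha N$, and the hypothesis $T_H = \alpha T$ gives exactly $\sqrt{\alpha T\ps{\logo{M} + N\entropy{\alpha}}}$ inside the big-$\O$.

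What makes this robust --- and the reason the comparator is a \emph{sleeping} expert --- is that $e_H$ is asleep on every round with $u_t \notin H$, so the arbitrary vectors posted by dishonest users never enter the comparison for $H$ and the bound depends only on the honest rounds. Given Theorem~\ref{fullstatement} the corollary is therefore a direct specialization; the one genuine subtlety is that $\alpha$ is not known in advance, so the tuned product prior cannot be hard-coded. I expect this to be the main obstacle, and I would resolve it in the standard way by mixing over a logarithmically fine grid of inclusion probabilities (and, to recover the $\sqrt{\cdot}$ shape, over learning rates). That aggregation costs only an additive $\O{\sqrt{T\log\log T}}$, which is precisely the overhead the $\Ot{\cdot}$ notation absorbs and which turns the big-$\O$ of the refined bound into the stated $\Ot{\cdot}$.
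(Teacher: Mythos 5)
Your proposal announces that it will specialize Theorem~\ref{fullstatement} but never actually does so: instead it re-derives the bounds by analyzing an algorithm that maintains one sleeping specialist $e_S$ for every subset $S \subseteq \users$ under a prior over subsets. That is not $\collabmwm$. The subset-specialists construction is precisely the exponential-time reduction that the paper describes only as motivation and then explicitly replaces; $\collabmwm$ keeps one learner per (expert, user) pair (resp.\ one $\mathrm{MWM}_{\theta}$ instance per expert) and has each \emph{expert} learn which users to advise. So the structural claim that ``$\collabmwm$ aggregates these specialists \ldots under a prior $p$ over subsets'' is false of the algorithm the corollary is about, and the meta-level regret term $\O{\sqrt{T_H\ps{-\log p\of{H}}}}$ you invoke is a property of the wrong algorithm. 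That your final expressions coincide with the corollary is the point of the paper, not a proof of it.

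The actual derivation is a short algebraic specialization. Theorem~\ref{fullstatement} bounds the regret by $\Ot{\sqrt{V_H \log M + N\ps{V_H\alpha\log\frac 1{\alpha} + V_{\users\setminus H}(1-\alpha)\log\frac 1{1-\alpha}}}}$, where $V_H$ and $V_{\users\setminus H}$ are realized variances. For the coarse bound use $V_H \le T$, $V_{\users\setminus H}\le T$, and $\alpha\log\frac 1{\alpha} + (1-\alpha)\log\frac 1{1-\alpha} = \entropy{\alpha} = \O{1}$. For the refined bound use $V_H \le T_H = \alpha T$ and $V_{\users\setminus H} \le (1-\alpha)T$, so the bracketed term becomes $T\ps{\alpha^2\log\frac 1{\alpha} + (1-\alpha)^2\log\frac 1{1-\alpha}} \le 2\alpha T \entropy{\alpha}$ --- this last inequality is the one nontrivial step the paper cites --- and together with $V_H\log M \le \alpha T\log M$ this yields $\Ot{\sqrt{\alpha T\ps{\log M + N\entropy{\alpha}}}}$. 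None of the machinery in your writeup (priors over subsets, grids of inclusion probabilities, mixing over learning rates) belongs in the proof of the corollary; the analogous tuning issues are already handled inside the proofs of Theorem~\ref{fullstatement} and Theorem~\ref{mwmsbound}.
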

(The full statement removes the assumption that $u_t \in H$ for an $\alpha$ 
fraction of rounds, and provides a significantly tighter bound that depends
on the actual sequence of payoffs.)

To understand this bound,
consider the regret incurred by the typical user in $H$
over their first $k = T/N$ decisions.

If the users made decisions independently,
the per-user regret would be $\O{\sqrt{k \logo{M}}}$.
If the set $H$ was given in advance so that the users could perfectly share their information,
then the per-user regret would be $\O{\sqrt{k \frac{\logo{M}}{\abs{H}}}}$---this is the best
that we can hope to achieve by any collaborative algorithm.

For $\alpha = \Theta\of{1}$,
we bound the per-user regret by $\O{\sqrt{k\ps{ \frac {\logo{M}}{\abs{H}} + 1}}}$.

For $\alpha > 1/2$, our bound is
$\O{\sqrt{k\ps{ \frac {\logo{M}}{\abs{H}} + (1-\alpha) \log \frac 1{1 - \alpha}}}}$,
which converges to $\O{\sqrt{k \frac{\logo{M}}{\abs{H}}}}$
as $\alpha\rightarrow 1$.

For $\alpha < 1/2$,
our bound is $\O{\sqrt{k\ps{\frac{\logo{M}}{\abs{H}} + \log{\frac 1{\alpha}}}}}$.
We can see this is optimal in the case
where the users are divided into $\frac 1{\alpha}$ independent clusters:
even if the clustering of other users is given,
each user would incur regret $\O{\sqrt{k \logo{\frac 1{\alpha}}}}$
to learn which of the clusters they belong to.

These regret bounds imply
a bound against a stronger benchmark,
in which we divide $H$ up into $m$ groups $H_1, \ldots, H_m$
and choose the optimal $x_i \in \users$ for each group:
\[\OPTmH = \min_{H_1 \cup \cdots \cup  H_m = H} \sum_i \min_{x_i \in \users} \sum_{t \leq T : u_t \in H_i} \ell_t\of{x_i}\]
The algorithm $\collabmwm$ satisfies:
\begin{corollary}\label{optmh}
For any set of users $H$ and any $m > 1$:
\[ \lossH \leq \OPTmH + \Ot{\sqrt{T \ps{m \logo{M} + N \logo{m}}}} \]
\end{corollary}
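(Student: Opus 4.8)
The plan is to exploit the fact that $\collabmwm$ never looks at the set $H$: the guarantee of Corollary~\ref{simplestatement} is really a statement about a single run of the algorithm, so that one run satisfies the corresponding bound for \emph{every} subset of users simultaneously. I would therefore fix a partition $H_1 \cup \cdots \cup H_m = H$ attaining the minimum in the definition of $\OPTmH$, and write $\alpha_i = \abs{H_i}/N$ together with $T_i$ for the number of rounds $t \leq T$ with $u_t \in H_i$, so that $\sum_i \alpha_i = \abs{H}/N =: \alpha \leq 1$ and $\sum_i T_i \leq T$. Since both the losses and the comparator decompose over the blocks of the partition, we have $\lossH = \sum_i \ell_{\leq T}^{H_i}$ and $\OPTmH = \sum_i \OPTT^{H_i}$.

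First I would apply the per-block guarantee to each $H_i$, in the form
\[ \ell_{\leq T}^{H_i} \leq \OPTT^{H_i} + \Ot{\sqrt{T_i \ps{\logo{M} + N \entropy{\alpha_i}}}}, \]
which is Corollary~\ref{simplestatement} specialized to $H_i$ with the round count restricted to those rounds actually involving $H_i$. Summing over $i$ and invoking Cauchy--Schwarz on the $m$ square-root terms gives
\[ \lossH - \OPTmH \leq \Ot{\sqrt{\ps{\sum_i T_i}\ps{\sum_i \ps{\logo{M} + N \entropy{\alpha_i}}}}} \leq \Ot{\sqrt{T \ps{m\logo{M} + N \sum_i \entropy{\alpha_i}}}}. \]

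It then remains to control $\sum_i \entropy{\alpha_i}$ under the constraint $\sum_i \alpha_i = \alpha \leq 1$. I would split each term as $\entropy{\alpha_i} = \alpha_i \logo{\frac 1{\alpha_i}} + \ps{1 - \alpha_i}\logo{\frac 1{1 - \alpha_i}}$. The second pieces sum to $\O{1}$ because $\ps{1 - \alpha_i}\logo{\frac 1{1 - \alpha_i}} \leq \alpha_i/\ln 2$ and $\sum_i \alpha_i \leq 1$. For the first pieces, writing $p_i = \alpha_i/\alpha$ gives $\sum_i \alpha_i \logo{\frac 1{\alpha_i}} = \alpha \sum_i p_i \logo{\frac 1{p_i}} + \alpha \logo{\frac 1\alpha}$, and since $\sum_i p_i \logo{\frac 1{p_i}} \leq \logo{m}$ this is at most $\alpha\logo{m} + \alpha\logo{\frac 1\alpha} \leq \logo{m} + \O{1}$. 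Hence $N\sum_i \entropy{\alpha_i} \leq N\logo{m} + \O{N}$, and substituting into the display yields the claimed $\Ot{\sqrt{T\ps{m\logo{M} + N\logo{m}}}}$.

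The main obstacle is the first step rather than the arithmetic. Corollary~\ref{simplestatement} as stated bundles the round count and the size of $H$ into a single $\alpha T$ factor under a homogeneity assumption, whereas here the blocks $H_i$ need not see a number of rounds proportional to their size. Applying the guarantee block-by-block with the honest round count $T_i$ is exactly what lets Cauchy--Schwarz produce the $m\logo{M}$ term (rather than the $m\sqrt{\cdots}$ one gets by applying the plain bound $m$ times), so the proof genuinely relies on the payoff-dependent form of the regret bound (Theorem~\ref{fullstatement}) rather than the packaged corollary; verifying that this refined bound localizes correctly to each block, with $T_i$ replacing $\alpha T$ and $\entropy{\alpha_i}$ replacing the crude $N$ term, is where I expect the real work to lie.
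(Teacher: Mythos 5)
Your proof is correct and follows essentially the same route as the paper's: fix the optimal partition, apply the single-comparator guarantee (Corollary~\ref{simplestatement}, or Theorem~\ref{fullstatement} in the inhomogeneous case) to each block $H_i$, and sum. The only differences are in the bookkeeping --- the paper closes the sum with Jensen's inequality, symmetrizing to $\alpha_i = \alpha/m$, where you use Cauchy--Schwarz together with the entropy chain rule $\sum_i \alpha_i \log\frac{1}{\alpha_i} \leq \alpha \log m + \alpha \log \frac{1}{\alpha}$; and you confront the issue that the blocks need not see a number of rounds proportional to their size up front, whereas the paper defers exactly that point to a closing remark about reapplying Theorem~\ref{fullstatement} with convexity.
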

That is, the group collectively pays the regret required to solve $m$ parallel expert problems,
and each user pays the regret required to solve an experts problem with $m$ experts.

\subsection{Related work}

\textbf{Competitive collaborative learning}
\cite{ccl} addresses a collaborative version of the multi-armed bandit problem.
In their approach, each user learns either a good arm or a \emph{single}
other user to whom they delegate their decision
(that user may in turn delegate further).
In contrast, achieving our regret bounds requires sharing data across
\emph{all} sufficiently similar users.
This makes the problems conceptually distinct,
and they require completely different techniques.

\textbf{Collaborative filtering}
has been studied at length
and is probably the best understood setting for collaborative learning; see \cite{survey} for an overview.
A wide range of theoretical models for this problem
have been studied (\cite{mefiltering}, \cite{tellme}, \cite{matrix-prediction},  \cite{strangers}, \cite{recommenders}).

Collaborative filtering is closely related to the special case of collaborative prediction with expert advice
in which experts correspond to sets of ``good'' resources.

This is an important special case, but it does not capture the general behavior of prediction with expert advice.
The single-user version of collaborative filtering is typically trivial---try each resource
and discover which are good.

In contrast,
the single-user version of prediction with expert advice is a foundational problem
in learning theory.
So understanding how to generalize prediction with expert advice to the collaborative setting
is a natural and important step towards understanding collaborative learning in general.
Existing techniques for collaborative filtering cannot be applied to general prediction with expert advice,
and new techniques are needed.

\textbf{Adversarial learning.} Another literature
deals with learning problems in which an adversary has some influence
over the training or testing data \cite{adversaries}.
Our model of robust collaboration can be viewed within this framework,
as an attack model in which an adversary controls the data associated with some users.
The unique characteristic of our model is that we only care about the performance of our model
in rounds involving uncorrupted users; in our view
this is a very natural model of an important class of attacks,
and it allows us to obtain extremely strong regret bounds.

\section{Our algorithm}\label{peasection}

\subsection{Background: single-user prediction with expert advice}

As a subroutine, we will need to use a traditional algorithm
$\mwm$ based on multiplicative updates \cite{mwm}.
We will need to use a variant which tolerates different learning rates and initial
weights for different experts.
This variant provides three functions:
\newcommand{\init}[1]{\mathrm{INIT}\of{#1}}
\newcommand{\update}[2]{\mathrm{UPDATE}\of{#1, #2}}
\begin{itemize}
\item $\init{\experts, w, \epsilon}$, where $w$ and $\epsilon$ are positive vectors indexed by $x \in \experts$
with $\sum_x w\of{x} = 1$.
This outputs a new ``instance'' $A$, with initial weights $w$ and learning rates $\epsilon$.
The other routines are called with an instance as an argument.
If the weights are missing we assume they are uniform.
We may specify a single learning rate $\epsilon$ for all of the experts.
\item $\update{A}{\ell_t}$, where $\ell_t$ is a vector indexed by $x \in \experts$
with entries in $[-1, 1]$.
This updates the weights of $A$ based on the loss vector $\ell_t$, and outputs the new instance.
\item $A(x)$, where $x \in \experts$, outputs the current weight of expert $x$.
These weights are guaranteed to be non-negative and sum to $1$.
\end{itemize}
We write $\ell_t\of{A} = \sum_{x \in \experts} A\of{x} \ell_t\of{x}$.
More generally, if $p_t \in \Delta\of{\experts}$ is any probability distribution,
write $\ell_t\of{p_t} = \sum_{x \in \experts} p_t\of{x} \ell_t\of{x}$.

$\mwm$ satisfies the following performance guarantee:
\begin{lemma}\label{mwm}
For any $\experts, w, \epsilon, T$, any sequence of loss vectors $\ell_t$, and any $x^* \in \experts$:
\[ \sum_t \ell_t\of{A_t}  \leq\sum_t \ell_t\of{x^*} +
\epsilon\of{x^*} \sum_t \ps{\ell_t\of{A_t} - \ell_t\of{x^*}}^2 + \O{\frac{\logo{ \frac 1{w\of{x^*}}}}{\epsilon\of{x^*}}}, \]
where $A_1 = \init{\experts, w, \epsilon}$ and $A_{t+1} = \update{A_{t}}{\ell_{t}}$.
\end{lemma}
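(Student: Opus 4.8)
The plan is to run the multiplicative update not on the raw losses but on the \emph{instantaneous regrets} $r_t\of{x} = \ell_t\of{A_t} - \ell_t\of{x}$, the amount by which expert $x$ beats the current prediction. Concretely I would keep unnormalized weights with the second-order (``Prod''-style) update $\tilde w_{t+1}\of{x} = \tilde w_t\of{x}\ps{1 + \epsilon\of{x}\, r_t\of{x}}$, starting from $\tilde w_1 = w$, and output the normalized distribution $A_t\of{x} = \tilde w_t\of{x} / \sum_y \tilde w_t\of{y}$. Two elementary facts drive everything: the regrets are centered, $\sum_x A_t\of{x}\, r_t\of{x} = \ell_t\of{A_t} - \ell_t\of{A_t} = 0$; and $r_t\of{x^*}^2 = \ps{\ell_t\of{A_t} - \ell_t\of{x^*}}^2$ is exactly the quadratic term in the statement. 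It is this choice of update — regrets rather than losses — that produces the comparator-specific second-order term, rather than the loss-variance term one gets from the usual exponential weights.

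First I would track the comparator's weight. Telescoping gives $\tilde w_{T+1}\of{x^*} = w\of{x^*}\prod_t \ps{1 + \epsilon\of{x^*}\, r_t\of{x^*}}$, and applying $\log\of{1+z} \geq z - z^2$ (valid once $\epsilon\of{x^*}\abs{r_t\of{x^*}} \leq \tfrac12$, which holds whenever $\epsilon \leq \tfrac14$ since $\abs{r_t}\leq 2$, and which also keeps every factor positive) yields $\log \tilde w_{T+1}\of{x^*} \geq \log w\of{x^*} + \epsilon\of{x^*}\sum_t r_t\of{x^*} - \epsilon\of{x^*}^2\sum_t r_t\of{x^*}^2$. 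Second I would track the total weight $\tilde W_t = \sum_x \tilde w_t\of{x}$, whose update is $\tilde W_{t+1} = \tilde W_t\ps{1 + \sum_x A_t\of{x}\,\epsilon\of{x}\, r_t\of{x}}$.

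When a single rate $\epsilon$ is shared by all experts, centering gives $\sum_x A_t\of{x}\,\epsilon\, r_t\of{x} = 0$, so the total weight is \emph{exactly conserved}: $\tilde W_{T+1} = \tilde W_1 = 1$. Since $\tilde w_{T+1}\of{x^*} \leq \tilde W_{T+1} = 1$ forces $\log \tilde w_{T+1}\of{x^*} \leq 0$, combining with the comparator bound gives $\epsilon\sum_t r_t\of{x^*} \leq \log\frac{1}{w\of{x^*}} + \epsilon^2\sum_t r_t\of{x^*}^2$; dividing by $\epsilon$ and recalling $\sum_t r_t\of{x^*} = \sum_t\ps{\ell_t\of{A_t} - \ell_t\of{x^*}}$ is precisely the claimed inequality. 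Different \emph{initial} weights cost nothing extra: they enter only through $\tilde w_1\of{x^*} = w\of{x^*}$, hence only through the $\log\of{1/w\of{x^*}}$ term.

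The main obstacle is the genuinely per-expert learning rate, where this conservation fails. The per-round drift is $\sum_x A_t\of{x}\,\epsilon\of{x}\, r_t\of{x} = -\mathrm{Cov}_{A_t}\of{\epsilon, \ell_t}$, a covariance between the learning rates and the losses under $A_t$ that has no fixed sign, so $\tilde W_{T+1}$ need not remain $\leq 1$. The crux of the proof is therefore to show that the accumulated drift satisfies $\sum_t \log\of{1 + \sum_x A_t\of{x}\,\epsilon\of{x}\, r_t\of{x}} \leq 0$, equivalently $\tilde W_{T+1}\leq 1$, so that it can only help in the final comparison $\tilde w_{T+1}\of{x^*}\leq\tilde W_{T+1}$. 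I would attack this either by restricting to rates of bounded spread, which makes the summed covariance a lower-order correction, or by switching to the variable-rate Prod potential from the second-order online-learning literature, where the weights are renormalized between rounds so that a suitable mixed potential — rather than the raw total weight — is provably non-increasing. I expect this total-weight control under heterogeneous rates, not the comparator bound (which is routine once the update is fixed), to be where essentially all the work lies.
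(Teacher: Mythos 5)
There is a genuine gap, and you have in fact located it yourself: your argument is complete only for a single shared learning rate, while the lemma is stated (and is needed in this paper) for a fully heterogeneous vector $\epsilon$ --- e.g.\ the $B$ instances in the appendix use $\epsilon\of{0} = \epsilon_{N_D,V_D}$ and $\epsilon\of{1} = \epsilon_{N_H,V_H}$, which can differ by large factors, so ``restricting to rates of bounded spread'' does not cover the use cases, and ``switch to a variable-rate Prod potential'' is a pointer to a literature rather than a proof. As you note, with $A_t\of{x} = \tilde w_t\of{x}/\tilde W_t$ the drift $\sum_x A_t\of{x}\,\epsilon\of{x}\, r_t\of{x}$ is an unsigned covariance and $\tilde W_{T+1} \leq 1$ can fail, so the final comparison $\tilde w_{T+1}\of{x^*} \leq \tilde W_{T+1}$ is not available. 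The comparator half of your argument (telescoping the log of $\tilde w\of{x^*}$ and using $\log\of{1+z} \geq z - z^2$) is fine and matches the standard analysis the paper appeals to.

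The paper closes exactly this gap with a one-line design change rather than a harder potential argument: the \emph{played} distribution is not the normalized raw weight but $A_t\of{x} \propto \tilde w_t\of{x}\,\epsilon\of{x}$, while the multiplicative update $\tilde w_{t+1}\of{x} = \tilde w_t\of{x}\ps{1 + \epsilon\of{x}\, r_t\of{x}}$ with $r_t\of{x}$ centered at $\ell_t\of{A_t}$ is kept on the raw weights. Then the per-round change in total raw weight is
$\sum_x \tilde w_t\of{x}\,\epsilon\of{x}\, r_t\of{x} = \bigl(\sum_y \tilde w_t\of{y}\epsilon\of{y}\bigr)\sum_x A_t\of{x}\, r_t\of{x} = 0$,
because the centering is now with respect to the very distribution that was played; conservation is exact for arbitrary $\epsilon$, your inequality $\tilde w_{T+1}\of{x^*} \leq \tilde W_{T+1} = 1$ is restored, and the rest of your argument goes through verbatim (the initial weights still enter only through $\log\ps{1/w\of{x^*}}$). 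In short: the crux you identified is resolved not by controlling the drift but by redefining the prediction so that the drift is identically zero; without that (or an equivalent device) your proof does not establish the lemma as stated.
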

\begin{proof}
$\mwm$ internally maintains a set of weights $w\of{x}$ that sum to $1$.
The actual outputs $A(x)$ are proportional to $w\of{x} \epsilon\of{x}$.
The weights $w$ are then updated according to the rule
\[ w\of{x} \gets w\of{x} \ps{1 + \epsilon\of{x} \ps{\ell\of{x} - \ell\of{A}}}. \]
It is easy to verify that this rule exactly preserves the total weight.
The claimed regret bound then follows immediately
from the traditional analysis of multiplicative weight updates,
as in \cite{mwm}.
\end{proof}

\subsection{Basic algorithm}\label{basicalg}
In this section we describe our basic algorithm and prove a bound on its regret.
In the following sections we define and analyze two improvements on the basic algorithm
which achieve significantly stronger regret bounds.

We will now assume that the number of rounds $T$ is known---because all of our regret
bounds are $\Omego{\sqrt{T}}$, this assumption can easily be removed by a standard doubling trick.

In each round $t$,
each expert $x \in \experts$ decides whether it wants to offer advice
to the user $u_t$.
We then aggregate their advice using $\mwm$.
We need to cope with the fact that experts only offer
advice in a subset of the rounds;
for this we use a standard trick \cite{specialists}
to hold fixed each expert's (normalized) weight
during rounds where it does not offer advice.

Each expert $x$ itself uses an online learning algorithm
to decide when it should be willing to offer advice.
Expert $x$'s goal is to offer advice only when doing so
will increase its own weight.
This is roughly equivalent to offering advice only when doing so
will decrease the ``excess loss'' of expert $x$,
the difference between expert $x$'s loss
and the overall loss of our algorithm.
(Though the two are not equivalent, as discussed in Section~\ref{minvariance},
and our full algorithm must pay attention to the difference.)

Now suppose that $\lossH$ is significantly less than $\OPTH$.
This implies that the optimal expert $x_H$
could significantly increase its own weight
by choosing to offer advice precisely in rounds where $u_t \in H$.
Since $x_H$ offers advice in a nearly optimal set of rounds,
we conclude that the weight of $x_H$ must grow nearly as fast
as if it had offered advice only to users in $H$.
This leads to a bound on how much $\lossH$ can exceed
$\OPTH$.

In the basic version of our algorithm,
the expert decides whether to offer advice to user $u_t$
based only on their previous experiences with $u_t$.

%

\newcommand{\atx}{A_t\of{x}}

\begin{algorithm}\label{weakpea}
    \caption{Collaborative prediction with expert advice [simplified algorithm]}
    \Let{$A_1$}{$\init{\experts, \epsilon = \sqrt{\logo{M} / T}}$}
    \For{$x \in \experts$, $u \in \users$}{
        \Let{$B^{xu}$}{$\init{\b{0, 1}, \epsilon = \sqrt{N/T}}$}
    }
    \For{$t = 1, 2, \ldots$}{
        Observe $u_t \in \users$\;
        \For{$x \in \experts$}{
            \Let{$z_t^x$}{$B^{xu_t}\of{1}$}
            \Let{$w_t^x$}{$z_t^x \atx$}
        }
        \Let{$W_t$}{$\sum_x w_t^x$}
        Play $p_t\of{x} = w_t^x / W_t \in \Delta\of{\experts}$\;
        Observe $\ell_t : \experts \rightarrow [-1, 1]$\;
        \For{$x \in \experts$}{
            \Let{$\ell_t^A\of{x}$}{$z_t^x \ell_t\of{x} + (1 - z_t^x) \ell_t\of{p_t}$}
            \Let{$\ell_t^{B^{xu_t}}\of{1}$}{$\ell_t\of{x}$}
            \Let{$\ell_t^{B^{xu_t}}\of{0}$}{$\ell_t\of{p_t}$}
            \Let{$B^{xu_t}$}{$\update{B^{xu_t}}{\ell_t^{B^{xu_t}}}$}
        }
        \Let{$A_{t+1}$}{$\update{A_t}{\ell_t^A}$}
    }
\end{algorithm}

Our first lemma
shows that the excess loss of expert $x$ in the rounds
where it opts to make a prediction
is at most the excess loss of expert $x$ in rounds involving a user $u_t \in H$.
\begin{lemma}\label{reverselemma}
For every $x \in \experts$:
\[\sum_{t \leq T} z_t^x \ps{\ell_t\of{x} - \ell_t\of{p_t}} \leq \sum_{t \leq T : u_t \in H} \ps{\ell_t\of{x} - \ell_t\of{p_t}} + \O{\sqrt{T N}}\]
\end{lemma}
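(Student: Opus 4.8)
The plan is to reduce the lemma to the per-user guarantee of the binary learners $B^{xu}$. Fix an expert $x$, and for each user $u$ let $\tau\of{u} = \comp{t \leq T}{u_t = u}$ be the set of rounds involving $u$; these sets partition $\b{1, \ldots, T}$. The instance $B^{xu}$ is touched exactly in the rounds of $\tau\of{u}$, where it plays the mixture with weight $z_t^x$ on the action ``offer advice'' (loss $\ell_t\of{x}$) and weight $1 - z_t^x$ on ``abstain'' (loss $\ell_t\of{p_t}$). Its per-round loss is therefore $\ell_t\of{p_t} + z_t^x \ps{\ell_t\of{x} - \ell_t\of{p_t}}$, while the two fixed comparators have losses $\ell_t\of{x}$ (always offer) and $\ell_t\of{p_t}$ (always abstain). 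Writing $d_t = \ell_t\of{x} - \ell_t\of{p_t}$, the excess loss of $B^{xu}$ over the ``abstain'' action is $z_t^x d_t$ and over the ``offer'' action is $\ps{z_t^x - 1} d_t$.

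The key step is to apply Lemma~\ref{mwm} to each $B^{xu}$ with the comparator chosen to match $H$: compare against ``always offer'' when $u \in H$ and against ``always abstain'' when $u \notin H$. After cancelling the common $\sum \ell_t\of{p_t}$, the first choice yields $\sum_{t \in \tau\of{u}} z_t^x d_t \leq \sum_{t \in \tau\of{u}} d_t + r_u$ and the second yields $\sum_{t \in \tau\of{u}} z_t^x d_t \leq r_u$, where $r_u$ collects the regret terms of Lemma~\ref{mwm} for user $u$. Summing over all users and using that the $\tau\of{u}$ partition the rounds, the left-hand sides combine into $\sum_{t \leq T} z_t^x d_t$ and the comparator terms combine into exactly $\sum_{t \leq T : u_t \in H} d_t$, leaving $\sum_u r_u$ as the only error.

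It remains to bound $\sum_u r_u = \O{\sqrt{TN}}$, which is where the choice $\epsilon = \sqrt{N/T}$ does its work. Since the two actions of each $B^{xu}$ have uniform initial weight $1/2$, each additive term is $\O{1/\epsilon} = \O{\sqrt{T/N}}$, and summing over the $N$ users gives $\O{N\sqrt{T/N}} = \O{\sqrt{TN}}$. For the quadratic terms, note that $\ell_t\of{x}, \ell_t\of{p_t} \in [-1,1]$ gives $d_t \in [-2,2]$ and $z_t^x \in [0,1]$, so each squared excess-loss term is $\O{1}$; hence the quadratic contribution of user $u$ is $\O{\epsilon \abs{\tau\of{u}}}$, and summing gives $\O{\epsilon T} = \O{\sqrt{TN}}$ because $\sum_u \abs{\tau\of{u}} = T$. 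Adding the two pieces yields the claimed $\O{\sqrt{TN}}$.

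I expect the main obstacle to be the bookkeeping across the $N$ separate instances: one must verify that the additive and quadratic regret terms, accumulated over all users, both land at $\O{\sqrt{TN}}$ rather than something larger, and this balance is exactly what pins down the learning rate $\epsilon = \sqrt{N/T}$ (the additive terms push toward large $\epsilon$ while the quadratic terms push toward small $\epsilon$). The conceptual content---the ``reversal'' whereby a per-user abstention learner certifies a comparison against the honest set $H$---is captured entirely by the per-user choice of comparator, and everything else is routine.
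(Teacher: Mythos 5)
Your proof is correct and follows essentially the same route as the paper: apply Lemma~\ref{mwm} to each instance $B^{xu}$ with comparator ``offer'' for $u \in H$ and ``abstain'' for $u \notin H$, sum over users, and balance the $N \cdot \O{1/\epsilon}$ additive terms against the $\epsilon T$ quadratic terms at $\epsilon = \sqrt{N/T}$. The paper's one-line bound $\O{\sqrt{T/N} + T_u\sqrt{N/T}}$ per user is exactly your $r_u$ decomposition.
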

\begin{proof}

We apply Lemma~\ref{mwm} to each $\mwm$ instance $B^{xu}$,
and sum the resulting inequalities.
Write $h_u = 1$ if $u \in H$, and $0$ otherwise, and let $T_u = \abs{\b{t : u_t = u}}$.
\begin{align*}
\sum_{t \leq T} z_t^x \ps{\ell_t\of{x} - \ell_t\of{p_t}}
&= \sum_{u \in \users} \sum_{t \leq T : u_t = u} z_t^x \ps{\ell_t\of{x} - \ell_t\of{p_t}} \\
&\leq \sum_{u \in \users}  \ps{
\O{\sqrt{T/N} + T_u \sqrt{N/T}} + 
\sum_{t \leq T : u_t = u} h_u \ps{\ell_t\of{x} - \ell_t\of{p_t}}
} \\
&= \O{\sqrt{NT}} + \sum_{t \leq T : u_t \in H} \ps{\ell_t\of{x} - \ell_t\of{p_t}}
\end{align*}
as desired.
\end{proof}

\newcommand{\ra}{ \O{\sqrt{T \logo{M}}}}

Our second lemma shows that excess loss of an expert, in the rounds where it makes a prediction,
cannot be too large.
\begin{lemma}\label{specialistlemma}
For any $x \in \experts$,
\[\sum_{t \leq T} z_t^x \ps{\ell_t\of{p_t} - \ell_t\of{x}}  \leq \ra \]
\end{lemma}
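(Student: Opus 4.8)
The plan is to apply Lemma~\ref{mwm} to the single $\mwm$ instance $A$, using its loss vectors $\ell_t^A$ and taking the comparator $x^* = x$. The whole point of the construction is that the losses fed to $A$ are engineered so that the \emph{regret} of $A$ against $x$ is exactly the left-hand side we want to bound. So the real work is just to verify this identity and then to check that the two standard error terms coming out of Lemma~\ref{mwm} are each $\ra$.

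First I would compute $A$'s own reported loss $\ell_t^A\of{A_t} = \sum_x \atx \ell_t^A\of{x}$. Expanding $\ell_t^A\of{x} = z_t^x \ell_t\of{x} + \ps{1 - z_t^x}\ell_t\of{p_t}$ and using $w_t^x = z_t^x \atx$, $\sum_x \atx = 1$, and $\sum_x w_t^x \ell_t\of{x} = W_t \ell_t\of{p_t}$ (which is just the definition $p_t\of{x} = w_t^x / W_t$), the cross terms collapse and I expect the clean identity $\ell_t^A\of{A_t} = \ell_t\of{p_t}$: the loss $A$ reports for itself equals the loss the algorithm actually plays. Subtracting, the per-round regret term becomes
\[ \ell_t^A\of{A_t} - \ell_t^A\of{x} = \ell_t\of{p_t} - z_t^x \ell_t\of{x} - \ps{1 - z_t^x}\ell_t\of{p_t} = z_t^x\ps{\ell_t\of{p_t} - \ell_t\of{x}}, \]
so summing over $t$ reproduces the target sum exactly.

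It then remains to bound the two error terms in Lemma~\ref{mwm}. Since $A$ is initialized with uniform weights, $w\of{x} = 1/M$, so the initialization term is $\O{\logo{M}/\epsilon}$; with $\epsilon = \sqrt{\logo{M}/T}$ this is $\ra$. For the variance term, both $\ell_t^A\of{A_t} = \ell_t\of{p_t}$ and $\ell_t^A\of{x}$ lie in $[-1,1]$ (the latter being a convex combination of $\ell_t\of{x}$ and $\ell_t\of{p_t}$), so each squared difference is at most $4$; hence $\epsilon \sum_t \ps{\ell_t^A\of{A_t} - \ell_t^A\of{x}}^2 \leq 4\epsilon T = 4\sqrt{T \logo{M}} = \ra$. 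Adding the two terms gives the claimed bound.

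I do not expect a serious obstacle: the lemma is essentially an exercise in unwinding definitions. The one genuine idea — already baked into the algorithm's design — is the identity $\ell_t^A\of{A_t} = \ell_t\of{p_t}$, which says that the specialist-style loss assignment makes $A$'s self-reported loss coincide with the played loss, so that $A$'s regret against a fixed expert $x$ is precisely $x$'s excess loss on the rounds where it chose to predict. Everything afterward is controlling a variance term and an initialization term, both tuned by the choice $\epsilon = \sqrt{\logo{M}/T}$.
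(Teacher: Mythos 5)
Your proposal is correct and follows essentially the same route as the paper's proof: establish the identity $\ell_t^A\of{A_t} = \ell_t\of{p_t}$ by the same weight-cancellation computation, then apply Lemma~\ref{mwm} to the instance $A$ with comparator $x$ so that the per-round regret telescopes to $z_t^x\ps{\ell_t\of{p_t} - \ell_t\of{x}}$. Your explicit accounting of the variance and initialization terms is slightly more detailed than the paper, which absorbs both into $\ra$ without comment, but the argument is the same.
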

\begin{proof}
First, we observe that $\ell^A_t\of{A_t} = \ell_t\of{p_t}$:
\begin{align*}
\ell^A_t\of{A_t} 
&= \sum_x \atx \ps{z_t^x \ell_t\of{x} + (1 - z_t^x) \ell_t\of{p_t}} \\
&= \sum_x\atx z_t^x \ell_t\of{x} + \ell_t\of{p_t} \sum_x \atx - \ell_t\of{p_t} \sum_x \atx z_t^x \\
&= \sum_x w_t^x \ell_t\of{x}  + \ell_t\of{p_t} - \ell_t\of{p_t} \sum_x w_t^x \\
&= W_t \ell_t\of{p_t} + \ell_t\of{p_t} - W_t \ell_t\of{p_t} \\
&= \ell_t\of{p_t}.
\end{align*}
So we can apply the regret bound for $A$, and obtain:
\begin{align*}
\sum_{t \leq T} \ell_t\of{p_t}
&= \sum_{t \leq T} \ell^A_t\of{A_t} \\
&\leq \sum_{t \leq T} \ell^A_t\of{x} + \ra \\
&= \sum_{t \leq T} \ps{z_t^x \ell_t\of{x} + (1 - z_t^x) \ell_t\of{p_t}} + \ra \\
&= \sum_{t \leq T} \ell_t\of{p_t} + \sum_{t \leq T} z_t^x \ps{\ell_t\of{x} - \ell_t\of{p_t}} + \ra \\
\sum_{t \leq T} z_t^x \ps{\ell_t\of{p_t} - \ell_t\of{x}} &\leq \ra 
\end{align*}
\end{proof}

\begin{theorem}
For each $x$ and $H$,
algorithm~\ref{weakpea} satisfies
\[\lossH = \sum_{t \leq T: u_t \in H }{\ell_t\of{p_t}}
\leq \sum_{t \leq T : u_t \in H} {\ell_t\of{x}}
+ \O{\sqrt{T\ps{\log{M} + N}}}\]
\end{theorem}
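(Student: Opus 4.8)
The plan is to combine the two preceding lemmas, which together sandwich the single quantity $\sum_{t \leq T} z_t^x \ps{\ell_t\of{p_t} - \ell_t\of{x}}$---the excess loss of our play over expert $x$, restricted to the rounds in which $x$ chooses to offer advice. Lemma~\ref{specialistlemma} bounds this quantity from above, while a rearrangement of Lemma~\ref{reverselemma} bounds it from below in terms of the rounds involving honest users, so the theorem follows by chaining the two.

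First I would rewrite Lemma~\ref{reverselemma} by multiplying through by $-1$ and rearranging, obtaining the lower bound
\[ \sum_{t \leq T} z_t^x \ps{\ell_t\of{p_t} - \ell_t\of{x}} \geq \sum_{t \leq T : u_t \in H} \ps{\ell_t\of{p_t} - \ell_t\of{x}} - \O{\sqrt{TN}}. \]
Conceptually this says that the advice-offering rounds of $x$ accumulate at least as much excess loss, up to $\O{\sqrt{TN}}$, as the honest rounds do---because each per-user instance $B^{xu}$ has learned to offer advice on user $u$ whenever $u \in H$ is favorable for $x$.

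Next I would invoke Lemma~\ref{specialistlemma}, which upper-bounds exactly the same left-hand side by $\ra$. Composing the lower and upper bounds cancels the $z_t^x$-weighted sum, leaving
\[ \sum_{t \leq T : u_t \in H} \ps{\ell_t\of{p_t} - \ell_t\of{x}} \leq \O{\sqrt{T\logo{M}}} + \O{\sqrt{TN}}. \]
Finally I would absorb the two error terms into a single $\O{\sqrt{T\ps{\logo{M} + N}}}$ using $\sqrt{a} + \sqrt{b} = \O{\sqrt{a+b}}$, and move the fixed-expert sum to the right-hand side to reach the stated inequality $\lossH \leq \sum_{t \leq T : u_t \in H} \ell_t\of{x} + \O{\sqrt{T\ps{\logo{M} + N}}}$.

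I do not expect any serious obstacle, since all of the conceptual content already resides in the two lemmas and the theorem is essentially their juxtaposition. The only points requiring care are keeping the signs consistent when flipping Lemma~\ref{reverselemma}, and recognizing that both lemmas speak about the \emph{identical} quantity $\sum_{t} z_t^x \ps{\ell_t\of{p_t} - \ell_t\of{x}}$, which is what permits them to be composed directly.
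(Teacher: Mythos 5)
Your proposal is correct and follows exactly the same route as the paper: rearrange Lemma~\ref{reverselemma} to lower-bound $\sum_t z_t^x \ps{\ell_t\of{p_t} - \ell_t\of{x}}$ by the excess loss on honest rounds, upper-bound the same quantity via Lemma~\ref{specialistlemma}, and merge the $\O{\sqrt{TN}}$ and $\O{\sqrt{T\logo{M}}}$ error terms. No gaps; the sign-flip and the identification of the common quantity are handled correctly.
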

\begin{proof}
Applying Lemma~\ref{reverselemma} and then Lemma~\ref{specialistlemma}:
\begin{align*}
\sum_{t \leq T : u_t \in H} \ps{\ell_t\of{p_t} - \ell_t\of{x}}
&\leq \sum_{t \leq T} z_t^x \ps{\ell_t\of{p_t} - \ell_t\of{x}} + \O{\sqrt{T N}} \\
&\leq \ra + \O{\sqrt{TN}},
\end{align*}
as desired.
\end{proof}

\subsection{Improving the algorithm: minimizing variance}\label{minvariance}

The regret bound in the previous section depends on $\sqrt{T \logo{M}}$.
If $H$ is very small then this bound is problematic, since $T$
may be much larger than the number of rounds $T_H$ involving an honest user.

Suppose that $H$ is small and that $x_H$ is the expert who is optimal for
users in $H$.
Intuitively, if users outside of $H$ don't like $x_H$'s recommendations,
then $x_H$ should only make recommendations to users in $H$,
and so we should end up with a regret term that depends on $T_H$.
On the other hand, if users outside of $H$ do like $x_H$'s recommendations,
then that should be even more helpful for quickly identifying $x_H$.

So what can go wrong?
Suppose that the users outside of $x_H$ are \emph{indifferent}
to $x_H$'s recommendations---half of the time $\ell_t\of{x_H}$
is one less than $\ell_t\of{p_t}$,
and half of the time $\ell_t\of{x_H}$ is one more.
In this case, $x_H$ will continue to happily make recommendations
to users outside of $H$.

But now consider what happens to $x_H$'s weight if it does make a recommendation.
With probability $1/2$ it is multiplied by $(1 + \epsilon)$,
and with probability $1/2$ it is multiplied by $(1 - \epsilon)$.
The net effect of these two steps is to multiply $x_H$'s weight by
$(1 + \epsilon)(1 - \epsilon) = 1 - \epsilon^2$.
In general this ``volatility drag'' is $\epsilon^2 \ps{\ell_t \of{p_t} - \ell_t \of{x_H}}^2$,
and it occurs in every round where $x_H$ makes a recommendation.

To solve this problem,
we incorporate this drag into the expert's optimization problem.
That is, we adjust the losses
$\ell_t^{B^{xu}}(1)$ by adding the quadratic penalty $\epsilon \ps{\ell_t\of{p_t} - \ell_t\of{x}}^2$.
This corresponds to having the expert maximize their expected \emph{log weight} rather than their expected weight.

After making this change the analysis of the previous section can be
adapted to yield an improved regret bound that depends on $\sqrt{T_H \logo{M}}$.
The only additional difficulty is that we need to adjust the learning rate based on $T_H$,
which we don't know.
We overcome this difficulty by introducing a whole family of parallel experts with exponentially
distributed learning rates.
This leads to a regret of $\Ot{\sqrt{T_H \logo{M}}}$,
where the $\Ot{\cdot}$ hides an additive $\O{\sqrt{T \log\log T}}$.

In fact we can replace $T_H$ in the bound with the \emph{variance}, as in \cite{squint},
\[V_H = \sum_{t : u_t \in H} \ps{\ell_t\of{x} - \ell_t\of{p_t}}^2,\]
achieving a bound that mirrors Lemma~\ref{mwm}.

This improvement is included in our final algorithm in Section~\ref{alltogether}.

\subsection{Improving the algorithm: learning the base rate}

In our basic algorithm, the experts treat each user as a separate learning problem.
We can improve the algorithm by having the experts learn what fraction of the users are honest,
rather than implicitly expecting half of all users to be honest.

\newcommand{\mwms}{\mathrm{MWM}_{\theta}}

We introduce a new learning algorithm $\mwms$ for solving a simultaneous
prediction with expert advice problem for each user $u \in \users$.
$\mwms$ implements a similar interface to $\mwm$:

\newcommand{\inits}[1]{\mathrm{INIT}_{\theta}\of{#1}}
\newcommand{\updates}[3]{\mathrm{UPDATE}_{\theta}\of{#1, #2, #3}}
\begin{itemize}
\item $\inits{\users}$, where $\users$ is a set.
This outputs a new ``instance'' $A$.
The other routines are called with an instance as an argument.
\item $\updates{A}{u_t}{\ell_t}$, where $u_t \in \users$  and $\ell_t \in [-1, 1]$.
This updates the weights of $A$ based on the loss $\ell_t$ incurred by the user $u_t$, and outputs the new instance.
\item $A(u_t)$, where $u_t \in \users$, outputs a probability in $[0, 1]$.
\end{itemize}

Roughly speaking, $\mwms$ works by instantiating one expert for each parameter $\theta$
in $[0, 1]$.
That expert treats each user $u_t$ independently,
but has a ``prior'' probability of $\theta$ for each user.
$\mwms$ then competes with the best of these experts.

In Appendix~\ref{mwmsappendix}, we define $\mwms$
and prove the following result:
\begin{theorem}\label{mwmsbound}
For any $\users$ and $H \subset \users$,
any sequence of users $u_t \in \users$, and any sequence of losses $\ell_t \in [-1, 1]$,
we have:
\[ \sum_{t \leq T} \ell_t A_t\of{u_t}
\leq
\sum_{t \leq T : u_t \in H} \ell_t +
\theregret
\]
where $A_1 = \inits{\users}$, $A_{t+1} = \updates{A_t}{u_{t}}{\ell_{t}}$,
$V_H = \sum_{t \leq t : u_t \in H} \ps{\ell_t\of{x} - \ell_t\of{p_t}}^2$ and $\alpha = \abs{H} / N$.
\end{theorem}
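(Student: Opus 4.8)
The plan is to realize $\mwms$ as a two-level application of the multiplicative-weights routine from Lemma~\ref{mwm}. At the inner level, for each fixed choice of a prior $\theta$ and a pair of learning rates $\ps{\epsilon_1, \epsilon_0}$, I would maintain one independent two-expert instance per user $u \in \users$, whose experts are \emph{include} (expert $1$, which incurs loss $\ell_t$) and \emph{exclude} (expert $0$, which incurs loss $0$). The instance for $u$ is initialized with weights $w\of{1} = \theta$, $w\of{0} = 1 - \theta$ and learning rates $\epsilon\of{1} = \epsilon_1$, $\epsilon\of{0} = \epsilon_0$, and its scalar output $A_t\of{u}$ is the current weight on expert $1$, so its per-round loss is $\ell_t A_t\of{u}$. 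The benchmark $\sum_{t : u_t \in H}\ell_t$ is exactly the loss of the policy that plays expert $1$ on every user in $H$ and expert $0$ on every user outside $H$, so the target inequality is a per-user regret statement against this fixed policy.

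Next I would bound the regret of a single inner instance. Applying Lemma~\ref{mwm} to each per-user instance and summing gives a clean split, because the bound there involves only the learning rate and initial weight of the \emph{competitor} expert $x^*$. For a user in $H$ the competitor is expert $1$, contributing at most $\epsilon_1 \sum_{t : u_t = u}\ps{\ell_t - \ell_t A_t\of{u}}^2 + \O{\logo{\frac{1}{\theta}}/\epsilon_1}$; for a user outside $H$ the competitor is expert $0$, contributing at most $\epsilon_0 \sum_{t : u_t = u}\ps{\ell_t A_t\of{u}}^2 + \O{\logo{\frac{1}{1-\theta}}/\epsilon_0}$. Since $A_t\of{u}\in[0,1]$, the two variance sums are bounded by $V_H = \sum_{t : u_t \in H}\ell_t^2$ and $V_{\users\backslash H} = \sum_{t : u_t \notin H}\ell_t^2$, while the initial-weight terms accumulate the factors $\abs{H} = \alpha N$ and $N - \abs{H} = \ps{1-\alpha}N$. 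Optimizing the two halves separately gives $\O{\sqrt{N V_H \alpha \logo{\frac{1}{\theta}}}} + \O{\sqrt{N V_{\users\backslash H}\ps{1-\alpha}\logo{\frac{1}{1-\theta}}}}$, and the choice $\theta = \alpha$ converts this into the claimed $\theregret$ after combining the two square roots.

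Since the algorithm knows neither $\alpha$ nor the optimal rates $\epsilon_1, \epsilon_0$---all of which depend on the unknown $H$---the outer level runs $\mwm$ over a grid of inner instances indexed by $\ps{\theta, \epsilon_1, \epsilon_0}$ and plays the weighted average of their outputs. Because each inner output lies in $[0,1]$ and each scalar loss in $[-1,1]$, the outer losses lie in $[-1,1]$ and Lemma~\ref{mwm} applies directly. I would place $\theta$ on a geometric grid covering $[1/N, 1-1/N]$ and $\epsilon_1, \epsilon_0$ on geometric grids, so that for every $H$ some grid point matches $\ps{\alpha, \epsilon_1^*, \epsilon_0^*}$ to within a constant factor in each relevant term, while the total number of grid points stays polylogarithmic in $N$ and $T$. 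The outer regret is then $\O{\sqrt{T\logo{(\text{grid size})}}} = \O{\sqrt{T\log\log T}}$, exactly what $\Ot{\cdot}$ absorbs.

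The main obstacle is the discretization in this last step. The optimal $\ps{\theta, \epsilon_1, \epsilon_0}$ depend on the unknown $\alpha$, $V_H$, and $V_{\users\backslash H}$, so I must show that a grid of merely polylogarithmic size always contains a point that is simultaneously near-optimal for the $H$-term and the $\ps{\users\backslash H}$-term, uniformly over all sets $H$. Three facts make this possible: the dependence on each parameter is mild (logarithmic in $\theta$ and $1-\theta$, square-root in the rates), so constant-factor spacing costs only constant factors; $\epsilon_1$ governs only the $H$-term and $\epsilon_0$ only its complement, so the two can be tuned independently; and $\alpha = \abs{H}/N$ always lies in $[1/N, 1]$, so an $\O{\logo{N}}$-point geometric $\theta$-grid suffices. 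Ensuring the grid remains small enough that the outer aggregation overhead stays inside $\Ot{\cdot}$ is the delicate accounting that the formal proof must carry out.
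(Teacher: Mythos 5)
Your proposal matches the paper's proof essentially exactly: the paper also runs an outer $\mwm$ over a polylogarithmic grid of parameter tuples (it indexes them by $(N_H, N_D, V_H, V_D)$ rather than $(\theta, \epsilon_1, \epsilon_0)$, an equivalent reparametrization), with each tuple maintaining a per-user two-expert include/exclude instance whose prior and asymmetric learning rates produce the two optimized square-root terms you describe. The one detail you elide is that the outer aggregation cost $\O{\sqrt{T \log\log N}}$ is not absorbed by the claimed bound when $H = \emptyset$ or $H = \users$ (where $H_b(\alpha) = 0$); the paper patches this by adding two special grid points with constant initial weight.
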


%

With $\mwms$ in hand we can further improve Algorithm~\ref{weakpea}.
Rather than having each expert instantiate a separate instance $B^{xu}$ of $\mwm$
for each user $u$,
we have them instantiate a single instance $B^x$ of $\mwms$.
The analysis of the improved algorithm is then identical
to the analysis of Algorithm~\ref{weakpea},
except that the conclusion of Lemma~\ref{reverselemma} is strengthened appropriately.
The result is precisely the strengthened conclusion in Theorem~\ref{fullstatement}.
This improvement is incorporated into the full algorithm in the next section.

\subsection{Putting it all together}\label{alltogether}

In this section we update Algorithm~\ref{weakpea} to incorporate
the improvements described in the last two sections.
The result is Algorithm~\ref{strongpea}.

\newcommand{\epsV}{\epsilon_V}
\newcommand{\vs}{\mathcal{V}}
\newcommand{\diffsquared}{\ps{\ell_t\of{x} - \ell_t\of{p_t}}^2}
\begin{algorithm}\label{strongpea}
    \caption{Collaborative prediction with expert advice [full algorithm]}
    \Let{$\vs$}{$\b{1, 2, 4, \ldots, 2^{\floor{\log_2 T}}}$}
    \Let{$\epsV$}{$\min\b{1, \sqrt{\ps{\log{M} + \log\log{T}}/V}}$}
    \Let{$A_1$}{$\init{\experts \times \vs, \epsilon\of{x, V} = \epsV}$}
    \For{$x \in \experts, V \in \vs $}{
        \Let{$B^{x,V}_1$}{$\inits{\users}$}
    }
    \For{$t = 1, 2, \ldots$}{
        Observe $u_t \in \users$\;
        \For{$x \in \experts, V \in \vs$}{
            \Let{$z_t^{x, V}$}{$B^{x, V}_t\of{u_t}$}
            \Let{$w_t^{x, V}$}{$z_b^x \atx$}
        }
        \Let{$W_t$}{$\sum_{x, V} w_t^{x, V}$}
        Play $p_t\of{x} = \sum_V w_t^{x, V} / W_t \in \Delta\of{\experts}$\;
        Observe $\ell_t : \experts \rightarrow [-1, 1]$\;
        \For{$x \in \experts, V \in \vs$}{
            \Let{$\ell_t^A\of{x, V}$}{$z_t^{x, V} \ell_t\of{x} + \ps{1 - z_t^{x, V}} \ell_t\of{p_t}$}
            \Let{$\ell_t^{B^{x, V}}$}{$\ell_t\of{x} - \ell_t\of{p_t} + \epsV \diffsquared$}
            \Let{$B^{x, V}_{t+1}$}{$\updates{B^{x, V}_t}{u_t}{\ell_t^{B^{x, V}}}$}
        }
        \Let{$A_{t+1}$}{$\update{A_t}{\ell_t^A}$}
    }
\end{algorithm}
\begin{lemma}\label{reverselemmaplus}
For every $x \in \experts, V \in \vs, H \subset \users$:
\begin{align*}
\sum_{t \leq T} z_t^{x, V}
\ps{\ell_t\of{x} - \ell_t\of{p_t} + \epsV \diffsquared}
\leq & \sum_{t \leq T : u_t \in H} \ps{\ell_t\of{x} - \ell_t\of{p_t}}
+ \O{V_H \epsV }\\
&+\theregret
\end{align*}
where $V_H = \sum_{t \leq t : u_t \in H} \ps{\ell_t\of{x} - \ell_t\of{p_t}}^2$ and $\alpha = \abs{H} / N$.
\end{lemma}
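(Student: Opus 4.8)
The plan is to prove this as the strengthened analogue of Lemma~\ref{reverselemma}. In the basic algorithm each expert maintained a separate $\mwm$ instance $B^{xu}$ for every user, and we obtained the bound by invoking Lemma~\ref{mwm} once per user and summing. In Algorithm~\ref{strongpea} each pair $\ps{x, V}$ instead maintains a single $\mwms$ instance $B^{x,V}$, so the entire argument should collapse into one application of Theorem~\ref{mwmsbound}. The crucial observation is bookkeeping: the scalar losses fed to $B^{x,V}$ are exactly $\ell_t^{B^{x,V}} = \ell_t\of{x} - \ell_t\of{p_t} + \epsV\diffsquared$, and the probabilities it returns are exactly the coefficients $z_t^{x,V} = B^{x,V}_t\of{u_t}$ appearing on the left-hand side of the lemma.

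Concretely, I would fix $x \in \experts$, $V \in \vs$, and $H \subset \users$, and apply Theorem~\ref{mwmsbound} to the instance $B^{x,V}$ run on this loss sequence. The left-hand side $\sum_{t} z_t^{x,V}\,\ell_t^{B^{x,V}}$ is verbatim the left-hand side of the lemma. On the right-hand side, the benchmark term $\sum_{t \leq T : u_t \in H}\ell_t^{B^{x,V}}$ splits into a linear part $\sum_{t \leq T : u_t \in H}\ps{\ell_t\of{x} - \ell_t\of{p_t}}$, which is precisely the comparison term in the statement, plus the penalty part $\epsV\sum_{t \leq T : u_t \in H}\diffsquared$. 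By the definition $V_H = \sum_{t \leq T : u_t \in H}\diffsquared$, this penalty part equals $\epsV V_H = \O{V_H \epsV}$, the extra additive term in the lemma. The residual regret supplied by Theorem~\ref{mwmsbound} is exactly $\theregret$, so the three pieces assemble into the claimed inequality with no further work.

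The main obstacle is purely technical and concerns the hypotheses of Theorem~\ref{mwmsbound}. That theorem is stated for losses in $[-1,1]$, whereas $\ell_t^{B^{x,V}}$ ranges over a constant-sized interval: $\ell_t\of{x} - \ell_t\of{p_t} \in [-2,2]$ and $\epsV\diffsquared \in [0,4]$ since $\epsV \leq 1$. I would dispatch this by the standard rescaling, dividing the losses by a fixed constant before passing them to $B^{x,V}$ and multiplying back afterward, which inflates the regret only by that constant and is absorbed into the $\Ot{\cdot}$ and $\O{\cdot}$ notation. A second point requiring a line of care is that the variance appearing inside $\theregret$ must be read as the second moment of the excess loss $\ell_t\of{x} - \ell_t\of{p_t}$, i.e.\ as $V_H$ and $V_{\users \backslash H}$ in the sense of the lemma; expanding $\ps{\ell_t^{B^{x,V}}}^2$ shows the quadratic penalty contributes only an $\O{\epsV}$-relative correction to each squared term, so $\sum_{t \leq T : u_t \in H}\ps{\ell_t^{B^{x,V}}}^2 = \ps{1 + \O{\epsV}}V_H = \O{V_H}$, which does not change the stated regret.
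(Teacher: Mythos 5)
Your proposal is correct and matches the paper's own proof: both apply the guarantee for $\mwms$ (Theorem~\ref{mwmsbound}) directly to the instance $B^{x,V}$, identify the penalty term's contribution on the benchmark side as $\epsV V_H$, and dismiss the fact that $\ell_t^{B^{x,V}}$ lies outside $[-1,1]$ by noting it is still $\O{1}$. Your extra care about rescaling and about how the quadratic penalty perturbs the variance terms only elaborates on what the paper leaves implicit.
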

\begin{proof}
We apply Lemma~\ref{mwms} directly to the instance $B^{x, V}$.
Note that $V_H \epsV$
is precisely the total loss caused by the penalty term $\epsV \ps{\ell_t\of{x} - \ell_t\of{p_t}}^2$
in all rounds $t$ with $u_t \in H$.

After adding this quadratic penalty term the payoffs $\ell_t^{B^{x, V}}$
are no longer in $[-1, 1]$, but they are still $\O{1}$.
This does not affect the asymptotics in the regret bound.
\end{proof}

\newcommand{\zt}{z_t^{x, V}}
\begin{lemma}\label{specialistlemmaplus}
For any $x \in \experts, V \in \vs$,
\[\sum_{t \leq T} \zt\ps{\ell_t\of{p_t} - \ell_t\of{x}}
\leq \epsV \sum_{t \leq T} \zt \diffsquared + \O{\frac{\log{M} + \log\log{T}}{\epsV}}\]
\end{lemma}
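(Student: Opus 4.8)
The plan is to follow the proof of Lemma~\ref{specialistlemma} essentially verbatim, now applying Lemma~\ref{mwm} to the single instance $A$ whose expert set is the product $\experts \times \vs$ and whose learning rate on $\ps{x, V}$ is $\epsilon\of{x, V} = \epsV$.

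First I would re-establish the self-consistency identity $\ell_t^A\of{A_t} = \ell_t\of{p_t}$. The computation is identical to the one in the proof of Lemma~\ref{specialistlemma}: using $w_t^{x, V} = \zt\, A_t\of{x, V}$, the fact that the weights $A_t\of{x, V}$ sum to $1$, and the relation $\sum_{x, V} w_t^{x, V}\ell_t\of{x} = W_t\,\ell_t\of{p_t}$ that follows from $p_t\of{x} = \sum_V w_t^{x, V}/W_t$, the cross terms cancel and leave $\ell_t\of{p_t}$. Since each $\ell_t^A\of{x, V}$ is a convex combination of $\ell_t\of{x}$ and $\ell_t\of{p_t}$, the losses fed to $A$ lie in $[-1, 1]$ (unlike the penalized $B$-losses handled in Lemma~\ref{reverselemmaplus}), so Lemma~\ref{mwm} applies to $A$ without modification.

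Next I would apply Lemma~\ref{mwm} with competitor $\ps{x, V}$ and extract the variance term. Writing $\ell_t^A\of{x, V} = \ell_t\of{p_t} + \zt\ps{\ell_t\of{x} - \ell_t\of{p_t}}$ together with the identity above gives $\ell_t^A\of{A_t} - \ell_t^A\of{x, V} = \zt\ps{\ell_t\of{p_t} - \ell_t\of{x}}$, so the squared term in Lemma~\ref{mwm} equals $\ps{\zt}^2\diffsquared$. Because $\zt = B_t^{x, V}\of{u_t} \in [0, 1]$ we have $\ps{\zt}^2 \leq \zt$, which bounds the variance contribution by $\epsV\sum_t \zt\diffsquared$---exactly the first term on the right-hand side.

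Finally I would account for the additive term $\O{\logo{1/w\of{x, V}}/\epsV}$. As $A$ starts from uniform weights over $\experts \times \vs$ we have $w\of{x, V} = 1/\ps{M\abs{\vs}}$, and since $\abs{\vs} = \floor{\log_2 T} + 1 = \O{\logo T}$ this gives $\logo{1/w\of{x, V}} = \logo M + \O{\log\log T}$, matching the claimed $\O{\ps{\logo M + \log\log T}/\epsV}$. Substituting the self-consistency identity into the regret inequality, cancelling the common $\sum_t \ell_t\of{p_t}$ from both sides, and rearranging then yields the stated bound. I do not expect a genuine obstacle: the only points needing care are the $\ps{\zt}^2 \leq \zt$ step that converts the squared differences into the linear variance term appearing on the right, and the bookkeeping of the $\abs{\vs}$ factor that produces the $\log\log T$; both are routine once the self-consistency identity is re-derived.
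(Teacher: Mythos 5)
Your proposal is correct and follows essentially the same route as the paper: establish the self-consistency identity $\ell_t^A\of{A_t} = \ell_t\of{p_t}$, apply Lemma~\ref{mwm} to $A$ with competitor $\ps{x,V}$, use $\ps{z_t^{x,V}}^2 \leq z_t^{x,V}$ to linearize the variance term, and absorb the $\abs{\vs} = \O{\log T}$ factor into the $\log\log T$. Your write-up is in fact slightly more careful than the paper's, which leaves the uniform-weight bookkeeping and the $z^2 \leq z$ step implicit.
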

\begin{proof}
As before, we observe that $\ell^A_t\of{A_t} = \ell_t\of{p_t}$:
\renewcommand{\atx}{A_t\of{x, V}}
\begin{align*}
\ell^A_t\of{A_t} 
&= \sum_{x, V} \atx \ps{\zt \ell_t\of{x} + (1 - \zt) \ell_t\of{p_t}} \\
&= \sum_{x, V} \atx \zt \ell_t\of{x} + \ell_t\of{p_t} \sum_{x, V} \atx - \ell_t\of{p_t} \sum_{x, V} \atx \zt \\
&= \sum_{x, V} w_t^{x, V} \ell_t\of{x}  + \ell_t\of{p_t} - \ell_t\of{p_t} \sum_{x, V} w_t^{x, V} \\
&= W_t \ell_t\of{p_t} + \ell_t\of{p_t} - W_t \ell_t\of{p_t} \\
&= \ell_t\of{p_t}.
\end{align*} 
So we can apply the regret bound for $A$, and obtain:
\newcommand{\zdiff}{\zt \ell_t\of{x} + \ps{1 - \zt} \ell_t\of{p_t}}
\begin{align*}
\sum_{t \leq T} \ell_t\of{p_t}
=& \sum_{t \leq T} \ell^A_t\of{A_t} \\
\leq & \sum_{t \leq T} \ell^A_t\of{x, V} + \epsV \sum_{t \leq T} \ps{\ell^A_t\of{x, V} - \ell^A_t\of{p_t}}^2 + \O{\frac{\log{M} + \log\log{T}}{\epsV}}\\
=& \sum_{t \leq T} \ps{\zdiff}  \\
&+ \epsV \sum_{t \leq T} \ps{\zdiff - \ell_t\of{p_t}}^2  \\
&+ \O{\frac{\log{M} + \log\log{T}}{\epsV}}\\
=& \sum_{t \leq T} \ell_t\of{p_t} + \sum_{t \leq T} \zt \ps{\ell_t\of{x} - \ell_t\of{z_t}}
+ \epsV \sum_{t \leq T} \ps{\zt \ps{\ell_t\of{x} - \ell_t\of{p_t}}}^2 \\
&+ \O{\frac{\log{M} + \log\log{T}}{\epsV}}\\
\leq& \sum_{t \leq T} \ell_t\of{p_t} + \sum_{t \leq T} \zt \ps{\ell_t\of{x} - \ell_t\of{z_t}}
+ \epsV \sum_{t \leq T} \zt \ps{\ell_t\of{x} - \ell_t\of{p_t}}^2 \\
&+ \O{\frac{\log{M} + \log\log{T}}{\epsV}}\\
\sum_{t \leq T} \zt \ps{\ell_t\of{p_t} - \ell_t\of{x}} &\leq \epsV \sum_{t \leq T} \zt \diffsquared + \O{\frac{\log{M} + \log\log{T}}{\epsV}},
\end{align*}
as desired.
\end{proof}

\begin{theorem}\label{fullstatement}
For each $x$ and $H$,
algorithm~\ref{strongpea} satisfies
\[ \sum_{t \leq T: u_t \in H }{\ell_t\of{p_t}}
\leq \sum_{t \leq T : u_t \in H} {\ell_t\of{x}}
+ \Ot{\sqrt{V_H \log{M} + N\of{V_H \alpha \log{\frac 1{\alpha}} + V_{\users \backslash H} (1 - \alpha) \log{\frac 1{1 - \alpha}}}}}\]
Where $V_H = \sum_{t \leq T : u_t \in H}{\diffsquared}$ and $\alpha = \abs{H}/N$.
\end{theorem}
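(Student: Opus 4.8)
The plan is to combine Lemma~\ref{reverselemmaplus} and Lemma~\ref{specialistlemmaplus} exactly as in the proof of the basic theorem, but now exploiting the quadratic penalty so that the variance terms cancel. Fix the target expert $x$ (given in the theorem) and a scale $V \in \vs$ to be chosen at the end; since both lemmas hold for \emph{every} pair $\ps{x, V}$, we are free to select whichever $V$ is most convenient for this $x$. Abbreviating $d_t = \ell_t\of{x} - \ell_t\of{p_t}$ so that $\diffsquared = d_t^2$, Lemma~\ref{reverselemmaplus} reads
\[
\sum_{t \leq T} \zt \ps{d_t + \epsV d_t^2} \leq \sum_{t \leq T : u_t \in H} d_t + \O{V_H \epsV} + \theregret,
\]
while Lemma~\ref{specialistlemmaplus} reads
\[
-\sum_{t \leq T} \zt d_t \leq \epsV \sum_{t \leq T} \zt d_t^2 + \O{\frac{\log{M} + \log\log{T}}{\epsV}}.
\]

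First I would rearrange Lemma~\ref{reverselemmaplus} to isolate the quantity of interest, namely $\sum_{t \leq T : u_t \in H} \ps{\ell_t\of{p_t} - \ell_t\of{x}} = -\sum_{t \leq T : u_t \in H} d_t$, giving
\[
-\sum_{t \leq T : u_t \in H} d_t \leq -\sum_{t\leq T} \zt d_t - \epsV \sum_{t \leq T} \zt d_t^2 + \O{V_H \epsV} + \theregret.
\]
Substituting the bound on $-\sum_t \zt d_t$ from Lemma~\ref{specialistlemmaplus}, the two copies of $\epsV \sum_t \zt d_t^2$ cancel exactly—this cancellation is precisely what the penalty $\epsV \diffsquared$ added to $\ell_t^{B^{x,V}}$ was engineered to produce—leaving the clean bound
\[
\sum_{t \leq T : u_t \in H} \ps{\ell_t\of{p_t} - \ell_t\of{x}} \leq \O{\frac{\log{M} + \log\log{T}}{\epsV}} + \O{V_H \epsV} + \theregret.
\]

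The remaining work is to optimize over the scale. Because $\epsV \approx \sqrt{\ps{\log M + \log\log T}/V}$, the first two terms balance when $V$ is of the same order as $V_H$, so I would take $V$ to be the element of the dyadic grid $\vs$ nearest $V_H$ from above, giving $V_H \leq V \leq 2 V_H$ (and using the largest element of $\vs$ when $V_H$ exceeds it, which costs only a constant factor since $d_t \in [-2,2]$ forces $V_H \leq 4T$, within a constant of $\max \vs$). With this choice both terms become $\O{\sqrt{V_H \ps{\log M + \log\log T}}}$; since $V_H \leq 4T$ the $\log\log T$ contribution is $\O{\sqrt{T \log\log T}}$ and is absorbed into $\Ot{\cdot}$, leaving $\Ot{\sqrt{V_H \log M}}$. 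Adding the $\theregret$ term and folding the two square roots together via $\sqrt{a} + \sqrt{b} = \O{\sqrt{a+b}}$ yields exactly the stated bound.

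I expect no genuine obstacle here—the clever content (the variance cancellation) is already arranged by the algorithm's loss design and the two preceding lemmas, so the theorem follows by bookkeeping. The only care needed is in the final optimization's edge cases: when $V_H < \log M + \log\log T$ the clamp $\epsV = 1$ is active, and when $V_H > \max\vs$ the grid runs out. In both cases one checks directly that the claimed bound still holds up to constants, since the clamp only enlarges $\epsV$ and the grid remains within a constant factor of $V_H$.
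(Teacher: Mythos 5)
Your proposal is correct and follows essentially the same route as the paper: both combine Lemma~\ref{reverselemmaplus} and Lemma~\ref{specialistlemmaplus} so that the quadratic penalty terms cancel, then pick $V \in \vs$ within a factor of two of $V_H$ so that $\epsV$ balances the $\O{V_H\epsV}$ and $\O{\ps{\log M + \log\log T}/\epsV}$ terms at $\Ot{\sqrt{V_H \log M}}$. Your treatment of the edge cases (the clamp on $\epsV$ and the range of the dyadic grid) is slightly more careful than the paper's one-line remark that $V_H \leq T$, but the argument is the same.
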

\begin{proof}
Note that $V_H \leq T$.
Thus there exists some $V \in \vs$ with $V \leq V_H \leq 2V$.

Applying Lemma~\ref{reverselemmaplus} with the pair $x, V$ we have:
\begin{align*}
\sum_{t \leq T : u_t \in H} \ps{\ell_t\of{p_t} - \ell_t\of{x}}
&\leq
\sum_{t \leq T} z_t^{x, V}
\ps{\ell_t\of{p_t} - \ell_t\of{x} - \epsV \diffsquared}
+ \O{\sqrt{V_H \logo{M}}}\\
&+\theregret
\end{align*}
Applying Lemma~\ref{specialistlemmaplus}:
\begin{align*}
\sum_{t \leq T} z_t^{x, V}
\ps{\ell_t\of{p_t} - \ell_t\of{x} - \epsV \diffsquared}
&\leq \O{\frac{\log{M} + \log\log{T}}{\epsV}}
&\leq \Ot{\sqrt{V_H \log{M}}}
\end{align*}
Combining the two inequalities gives the desired result.
\end{proof}
Corollary~\ref{simplestatement} follows immediately
from the observation that $V_H \leq T_H$,
and the inequality $\alpha^2 \log{\frac 1{\alpha}} + \of{1 - \alpha}^2 \log{\frac 1{1 - \alpha}} \leq 2 \alpha H_b\of{\alpha}$.

\subsection{Competing with $\OPTmH$}
We now prove Corollary~\ref{optmh}:
\begin{corollary*}[Restatement of Corollary~\ref{optmh}]
For any set of users $H$ and any $m > 1$:
\[ \lossH \leq \OPTmH + \Ot{\sqrt{T \ps{m \logo{M} + N \logo{m}}}} \]
\end{corollary*}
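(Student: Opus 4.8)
The plan is to reduce the grouped benchmark $\OPTmH$ to $m$ separate applications of the single-group guarantee and then aggregate the resulting regrets. I would fix a partition $H = H_1\cup\cdots\cup H_m$ and experts $x_1,\ldots,x_m$ attaining the minimum defining $\OPTmH$, and write $\alpha_i = \abs{H_i}/N$, so that $\sum_i\alpha_i = \abs{H}/N\leq 1$. Applying the tighter (entropy) bound of Corollary~\ref{simplestatement} to each group $H_i$ against its comparator $x_i$ gives
\[ \sum_{t\leq T\,:\,u_t\in H_i}\ell_t\of{p_t}\leq\sum_{t\leq T\,:\,u_t\in H_i}\ell_t\of{x_i} + \Ot{\sqrt{\alpha_i T\ps{\logo{M} + N\,\entropy{\alpha_i}}}}. \]
Because the $H_i$ partition $H$, summing over $i$ makes the left-hand side exactly $\lossH$ and the right-hand comparator exactly $\OPTmH$; it therefore suffices to show that the sum of the $m$ regret terms is $\Ot{\sqrt{T\ps{m\logo{M} + N\logo{m}}}}$.

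To aggregate I would split each term with $\sqrt{a+b}\leq\sqrt{a}+\sqrt{b}$ into a $\logo{M}$ piece and an entropy piece and bound the two sums separately by Cauchy--Schwarz. For the first, using $\sum_i\alpha_i\leq 1$,
\[ \sum_i\sqrt{\alpha_i T\logo{M}} = \sqrt{T\logo{M}}\sum_i\sqrt{\alpha_i}\leq\sqrt{T\logo{M}}\,\sqrt{m}\,\sqrt{\sum_i\alpha_i}\leq\sqrt{m\,T\logo{M}}, \]
which supplies the $\sqrt{T\,m\logo{M}}$ contribution. For the second I would pair the factors $\sqrt{\alpha_i}$ and $\sqrt{\entropy{\alpha_i}}$:
\[ \sum_i\sqrt{\alpha_i TN\,\entropy{\alpha_i}} = \sqrt{TN}\sum_i\sqrt{\alpha_i\,\entropy{\alpha_i}}\leq\sqrt{TN}\,\sqrt{\sum_i\alpha_i}\,\sqrt{\sum_i\entropy{\alpha_i}}. \]
The key estimate is $\sum_i\entropy{\alpha_i}\leq\logo{m} + \O{1}$: each term $\ps{1-\alpha_i}\logo{\frac{1}{1-\alpha_i}}\leq\alpha_i$, so these contribute at most $\sum_i\alpha_i=\O{1}$, while $\sum_i\alpha_i\logo{\frac{1}{\alpha_i}}$ is maximized over $\sum_i\alpha_i\leq 1$ at the uniform split $\alpha_i = 1/m$, where it equals $\logo{m} + \O{1}$. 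This gives the $\sqrt{TN\logo{m}}$ contribution, and combining the two pieces yields the claimed bound.

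The step I expect to be the main obstacle is this aggregation, specifically securing $N\logo{m}$ rather than $Nm$ in the per-user term. Summing the unconditional bound $\O{\sqrt{T\ps{\logo{M}+N}}}$ of Corollary~\ref{simplestatement} over all $m$ groups loses a factor of order $\sqrt{m}$ and yields only $\O{m\sqrt{T\ps{\logo{M}+N}}}$; the improvement comes entirely from the entropy form of the single-group guarantee, where $\entropy{\alpha_i}$ carries the $\alpha_i$ prefactor, which is what lets Cauchy--Schwarz turn $\sum_i\alpha_i\,\entropy{\alpha_i}$ into the maximum entropy $\logo{m}$ of a distribution on $m$ parts. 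A secondary point to check is that the entropy form of Corollary~\ref{simplestatement} is stated for a set occupying an $\alpha$ fraction of rounds while a group $H_i$ need not; re-deriving the single-group guarantee in terms of the realized per-group round counts settles this without altering the shape of the aggregation above.
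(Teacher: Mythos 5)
Your proposal follows essentially the same route as the paper: apply the entropy form of Corollary~\ref{simplestatement} to each group $H_i$, sum, and then aggregate the per-group regret terms, with your Cauchy--Schwarz step playing the role of the paper's Jensen/concavity step (both rest on the same fact that $\sum_i \entropy{\alpha_i} \leq \logo{m} + \O{1}$ for a subprobability vector on $m$ parts). You also correctly flag the same caveat the paper raises --- that the tighter bound assumes $H_i$ occupies an $\alpha_i$ fraction of rounds --- which the paper resolves by invoking Theorem~\ref{fullstatement} with $V_{H_i} = V_H/m$.
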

\begin{proof}
We apply Corollary~\ref{simplestatement} to each of the sets $H_i$, and sum the resulting inequalities.
Let $\alpha_i = \abs{H_i} / N$, and $\alpha = \abs{H} / N$.
\begin{align*}
\sum_{t : u_t \in H_i} \ell_t\of{p_t}
&\leq \sum_{t : u_t \in H_i} \ell_t\of{x_i} + \Ot{\sqrt{\alpha_i T \ps{\logo{M} + H_b\of{\alpha_i} N}}} \\
\sum_{t : u_t \in H} \ell_t\of{p_t}
&\leq \OPTmH + \Ot{\sum_i \sqrt{\alpha_i T \ps{\logo{M} + H_b\of{\alpha_i} N}}} \\
&\leq \OPTmH + \Ot{\sqrt{T N} + \sum_i \sqrt{\alpha_i T \ps{\logo{M} + \alpha_i N \log \alpha_i}}}
\end{align*}
We have $\sum \alpha_i = \alpha$,
and this regret bound is a concave function of $\alpha_i$.
So by Jensen's inequality 
we can replace $\alpha_i$ with $\alpha/m$:
\begin{align*}
\sum_{t : u_t \in H_i} \ell_t\of{p_t}
&\leq \OPTmH + \Ot{\sqrt{T N} + \sum_i \sqrt{\alpha_i T \ps{\logo{M} + \alpha_i N \log \frac 1{\alpha_i}}}} \\
&\leq \OPTmH + \Ot{\sqrt{T N} + m \sqrt{\alpha T / m \ps{\logo{M} + N / m \log \frac 1{\alpha} + N / m \log m}}} \\
&\leq \OPTmH + \Ot{\sqrt{T N} + \sqrt{\alpha T \ps{m \logo{M} + N \log \frac 1{\alpha} + N \log m}}} \\
&\leq \OPTmH + \Ot{\sqrt{T \ps{m \logo{M} + N \log m}}}
\end{align*}
as desired.

We can only apply Corollary~\ref{simplestatement} when about $\Theta\of{\alpha_i T}$ rounds involve
the users in $H_i$, for each $i$.
In general, we can make the same argument by applying Theorem~\ref{fullstatement},
and applying convexity again to assume $V_{H_i} = V_H / m$.
\end{proof}

\section{Open questions}

The robust collaborative learning framework
provides a general transformation from single-user learning
problem to robust collaborative learning problems.
We have answered a few fundamental questions,
but we leave many more open.

\begin{itemize}
\item \textbf{Parallel expert problems.} Suppose the same set of users participate
in many online services $1, 2, \ldots, k$.
The same users may behave honestly,
and the same groups of users may tend to share tastes,
across many different online services.
We would like to be able to amortize the additional regret over all of these services,
rather than running a separate collaborative learning algorithm for each of them.
This corresponds to an experts problem with a simple combinatorial structure:
an ``expert'' corresponds to a choice of expert 
in each of the $k$ underlying problems.
We can apply our results in this setting, but the runtime is exponential in $k$
since we must explicitly represent each expert.
\cite{mefiltering} essentially solves
the special case where the number of experts in each problem is $2$.
But the general problem remains open,
and their regret bounds are suboptimal.
\item \textbf{Online convex optimization} Online convex optimization
is an extremely general learning problem.
Our algorithm can be adapted to online convex optimization,
but the resulting algorithm is intractable.
Understanding how to generalize online convex optimization to the collaborative setting
is a natural next step towards a general theory of collaborative learning.
\item \textbf{Bandit feedback.} Our algorithms all require full feedback.
It seems likely that they can be extended to the contextual bandits setting,
which would be important for many practical applications.
Without some additional stochastic assumptions,
we expect that the regret will have to be $\Omego{\sqrt{TAN}}$,
where $A$ is the number of available actions.
Even this result would greatly improve the practical applicability of our algorithm.
It is not obvious how to generalize our results even when $A = 2$,
without obtaining regret that depends on $T^{2/3}$.
\item \textbf{Exploiting side information about users.} Our regret bounds
depend on a quantity like $N H_b{\alpha}$,
representing the prior probability of $H$ under a natural distribution.
In realistic settings, there is significant side information about users
that may help us guess which users are honest,
and help us predict which users will have similar preferences.
For example, users who are friends with each other
may be especially likely to have common tastes (and to either
both be honest or neither be honest).
Incorporating this kind of side information is non-trivial,
but could potentially lead to much stronger bounds.
\item \textbf{Memory requirements.} Our algorithm for prediction with expert advice
requires maintaining one weight for each (expert, user) pair.
When the number of users and experts is large, this may be infeasible.
A more efficient algorithm might only require $\O{\abs{\users} + \abs{\experts}}$
storage rather than $\O{\abs{\users} * \abs{\experts}}$ storage.

\end{itemize}

\bibliographystyle{acm}
\bibliography{collab}

\appendix

\section{Defining $\mwms$}\label{mwmsappendix}

\newcommand{\ns}{\mathcal{N}}
\renewcommand{\epsV}[2]{\epsilon_{N_{#1}, V_{#1}}}
\newcommand{\indices}{N_D, N_H, V_D, V_H}
\newcommand{\loopindices}{N_D, N_H \in \ns, V_D, V_H \in \vs}
\begin{algorithm}\label{mwms}
    \caption{$\mwms$}
    \Let{$\vs$}{$\b{1, 2, 4, \ldots, 2^{\floor{\log_2 T}}}$}
    \Let{$\ns$}{$\b{1, 2, 4, \ldots, 2^{\floor{\log_2 N}}}$}
    \Let{$\Theta$}{$\ns^2 \times \vs^2$}
    \Let{$\epsV{S}{S}$}{$\sqrt{\ps{N_S \log \frac {N}{N_S} + \log\log T}/V_S}$}
    \Let{$A_1$}{$\init{\Theta, \epsilon = \sqrt{\ps{\log\log T + \log\log N} / T}}$}
    \For{$\loopindices, u \in \users$}{
        \Let{$B_1^{\indices, u}$}{$\init{\b{0, 1}, w(0) \propto N_D, w(1) \propto N_H, \epsilon(0) = \epsV{D}{D}, \epsilon(1) = \epsV{H}{H}}$}
    }
    \For{$t = 1, 2, \ldots$}{
        Observe $u_t \in \users$\;
        Play $p_t = \sum_{\theta \in \Theta} A_t\of{\theta} B_t^{\theta, u_t}\of{1}$\;
        Observe $\ell_t \in [-1, 1]$\;
        \For{$\theta \in \Theta$}{
            \Let{$\ell_t^A\of{\theta}$}{$\ell_t B_t^{\theta, u_t}(1)$}
            \Let{$B_{t+1}^{\theta, u_t}$}{$\update{B_t^{\theta, u_t}}{\ell_t}$}
        }
        \Let{$A_{t+1}$}{$\update{A_t}{\ell_T^A}$}
    }
\end{algorithm}

$\mwms$ is defined in Figure~\ref{mwms}.
$C \gets \inits{\users}$ runs the code before the loop over $t$.
$C\of{u_t}$ returns $p_t$.
$\updates{C}{u_t}{\ell_t}$ advances the loop over $t$.

\begin{theorem*}[Restatement of Theorem~\ref{mwmsbound}]
\end{theorem*}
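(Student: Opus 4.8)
The plan is to analyze Algorithm~\ref{mwms} as a two-level application of the multiplicative-weights bound of Lemma~\ref{mwm}: an outer instance $A$ that competes with the best parameter $\theta \in \Theta$, and, for each $\theta$, a family of independent binary instances $B^{\theta, u}$ that decide user-by-user whether to ``trust'' each user. First I would reduce the left-hand side to the loss of the outer instance. Since the algorithm plays $p_t = \sum_{\theta} A_t\of{\theta} B_t^{\theta, u_t}\of{1}$ and sets $\ell_t^A\of{\theta} = \ell_t B_t^{\theta, u_t}\of{1}$, the outer instance's expected loss is $\ell_t^A\of{A_t} = \ell_t \sum_\theta A_t\of{\theta} B_t^{\theta, u_t}\of{1} = \ell_t A_t\of{u_t}$, so the quantity to be bounded is exactly $\sum_t \ell_t^A\of{A_t}$.

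Next I would apply Lemma~\ref{mwm} to $A$ to compete against an arbitrary fixed $\theta^* \in \Theta$. Because $\ell_t^A\of{\theta} \in [-1, 1]$, the quadratic term is $\O{\epsilon T}$ and the prior term is $\O{\logo{\abs{\Theta}} / \epsilon}$, where $\abs{\Theta} = \abs{\ns}^2 \abs{\vs}^2 = \O{\ps{\logo{N}}^2 \ps{\logo{T}}^2}$; with the chosen $\epsilon = \sqrt{\ps{\log\log T + \log\log N}/T}$ both contribute only $\O{\sqrt{T\ps{\log\log T + \log\log N}}}$, which the $\Ot{\cdot}$ notation absorbs. This leaves the task of bounding $\sum_t \ell_t^A\of{\theta^*} = \sum_t \ell_t B_t^{\theta^*, u_t}\of{1}$ for a single well-chosen $\theta^*$.

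The heart of the argument is the inner analysis. I would split this sum by user and apply Lemma~\ref{mwm} to each binary instance $B^{\theta^*, u}$, whose two ``experts'' carry losses $\ell_t$ (predict $1$) and $0$ (predict $0$); for each honest $u \in H$ I compare against expert $1$ and for each $u \in \users \backslash H$ against expert $0$, so that summing the comparison losses yields exactly $\sum_{t : u_t \in H} \ell_t$. The key is to take $\theta^* = \ps{N_D, N_H, V_D, V_H}$ by rounding $\ps{\abs{\users \backslash H}, \abs{H}}$ into $\ns^2$ and $\ps{V_{\users \backslash H}, V_H}$ into $\vs^2$, with each coordinate within a factor $2$ of the corresponding true quantity (legitimate since these never exceed $N$ or $T$). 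With the prior weights $w\of{1} \propto N_H$, $w\of{0} \propto N_D$, the per-user prior term for an honest user is $\O{\logo{N/N_H} / \epsilon\of{1}}$; summing over the $\abs{H} \approx N_H$ honest users and using $\epsilon\of{1} = \sqrt{\ps{N_H \logo{N/N_H} + \log\log T}/V_H}$ gives $\O{\sqrt{N_H \logo{N/N_H}\, V_H}} = \O{\sqrt{N \alpha \logo{1/\alpha}\, V_H}}$, the first term of $\theregret$. The quadratic terms are controlled by $\ps{\ell_t B_t\of{1} - \ell_t}^2 \leq \ell_t^2$, so they sum to $\epsilon\of{1} V_H$, of the same order. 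The identical computation for $u \in \users \backslash H$ (comparing against expert $0$, whose comparison loss is zero) produces $\O{\sqrt{N \ps{1-\alpha} \logo{1/(1-\alpha)}\, V_{\users \backslash H}}}$, and combining via $\sqrt a + \sqrt b = \O{\sqrt{a + b}}$ yields precisely $\theregret$.

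The main obstacle will be the bookkeeping that pins down the entropy factors: one must check that the prior weights $w\of{1} \propto N_H$, $w\of{0} \propto N_D$ together with the user-independent but $\theta$-dependent learning rates reproduce exactly the $N\alpha\logo{1/\alpha}$ and $N\ps{1-\alpha}\logo{1/(1-\alpha)}$ terms, and that the additive $\log\log T$ and $\log\log N$ corrections in the learning rates keep every term finite in the degenerate regimes $\alpha \to 1$ and $\alpha \to 0$, where $\logo{N/N_H}$ or $\logo{N/N_D}$ vanishes. These corrections, along with the outer regret, are exactly the lower-order contributions the $\Ot{\cdot}$ notation is meant to hide.
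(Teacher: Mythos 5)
Your proposal is correct and follows essentially the same route as the paper: reduce the left-hand side to the outer instance's loss, apply Lemma~\ref{mwm} at the outer level to compete with a $\theta^*$ obtained by rounding $\abs{H}$, $\abs{\users \backslash H}$, $V_H$, $V_{\users \backslash H}$ into $\ns^2 \times \vs^2$, and then apply Lemma~\ref{mwm} per user to the binary instances $B^{\theta^*, u}$ (comparing to expert $1$ on $H$ and expert $0$ off $H$), with the prior weights and tuned learning rates producing the entropy terms. The only piece you flag but do not carry out---the degenerate cases $\alpha \in \b{0, 1}$---is handled in the paper by adding two extra parameter values to $\Theta$ with constant initial weight, which is the same fix you anticipate.
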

For any $\users$ and $H \subset \users$,
any sequence of users $u_t \in \users$, and any sequence of losses $\ell_t \in [-1, 1]$,
we have:
\[ \sum_{t \leq T} \ell_t A_t\of{u_t}
\leq
\sum_{t \leq T : u_t \in H} \ell_t +
\theregret
\]
where $A_1 = \inits{\users}$, $A_{t+1} = \updates{A_t}{u_{t}}{\ell_{t}}$,
where $V_H = \sum_{t \leq t : u_t \in H} \ps{\ell_t\of{x} - \ell_t\of{p_t}}^2$ and $\alpha = \abs{H} / N$.
\begin{proof}

For every $\theta \in \Theta$
we have
\begin{align*}
\sum_t p_t \ell_t 
&= \sum_t \sum_{\theta} A_t \of{\theta} B_t^{\theta, u_t}\of{1} \ell_t \\
&= \sum_t \sum_{\theta} A_t \of{\theta} \ell_t^A\of{\theta} \\
&= \sum_t \ell_t^A \of{A_t} \\
&\leq \sum_t \ell_t B_t^{\theta, u_t}\of{1} + \O{\sqrt{T \of{\log\log N + \log \log T}}} \\
&= \sum_t \ell_t B_t^{\theta, u_t}\of{1} + \Ot{\sqrt{T \log \log N}}
\end{align*}
Let $V_u = \sum_{t \leq T : u_t = u}\of{\ell_t h_u - \ell_t B^{\theta, u}}^2$.

Now note that there exists a $\theta$ for which $\indices$
are all within a factor of two of their intended values, i.e.
\begin{align*}
N_H &\approx  \abs{H} \\
N_D &\approx  \abs{\users \backslash H} \\
V_H &\approx \sum_{u \in H} V_u \\
V_D &\approx \sum_{u \not \in H} V_u.
\end{align*}

We can apply Lemma~\ref{mwm} to the corresponding instances $B^{\theta, u}$
and sum the resulting inequalities across $u$:

\begin{align*}
\sum_{t : u_t = u} B^{\theta, u}_t\of{1} \ell_t 
\leq& \sum_{t : u_t = u} \ell_t + V_u \epsV{H}{H} + \O{\frac{\log{N/N_H}}{\epsV{H}{H}}} \\
\sum_{t : u_t = u} B^{\theta, u}_t\of{1} \ell_t 
\leq& V_u \epsV{D}{D} + \O{\frac{\log{N/N_D}}{\epsV{D}{D}}} \\
\sum_{t} B^{\theta, u_t}_t \of{1} \ell_t
\leq& \sum_{t : u_t \in H} \ell_t + \sum_{u \in H} V_u \epsV{H}{H} + \sum_{u \not \in H} V_u \epsV{D}{D} \\
&+ \O{\frac{\abs{H} \log{N/N_H}}{\epsV{H}{H}} + \frac{\abs{\users \backslash H} \log{N / N_D}}{\epsV{D}{D}}} \\
\leq& \sum_{t : u_t \in H} \ell_t + \O{V_H \epsV{H}{H} + V_D \epsV{D}{D}} \\
&+ \O{\frac{N_H \log{N/N_H}}{\epsV{H}{H}} + \frac{N_D \log{N / N_D}}{\epsV{D}{D}}} \\
\leq& \sum_{t : u_t \in H} \ell_t + \O{\sqrt{V_H N_H \log N/N_H}}
+ \O{\sqrt{V_D N_D \log N/N_D}} \\
=& \sum_{t : u_t \in H} \ell_t  + \theregret
\end{align*}

Combining these two inequalities, we obtain
\begin{align*}
\sum_t p_t \ell_t &\leq \sum_t \ell_t B_t^{\theta, u}\of{1} + \Ot{\sqrt{T \log \log N}} \\
&\leq \sum_{t : u_t \in H} \ell_t + \theregret + \Ot{\sqrt{T \log \log N}} \\
&= \sum_{t : u_t \in H} \ell_t + \theregret,
\end{align*}
as desired.
The last equality holds whenever $H \neq \users$ and $H \neq \emptyset$ because $\log \log N \leq N H_b\of{\alpha}$ for
any $1/N < \alpha < 1 - 1/N$.


Coping with the case $H = \users$ or $H = \emptyset$ actually requires a very slight adjustment to our algorithm:
we include in $\Theta$ a new pair of values $\theta = (0, 0, N, 0), (0, 0, 0, N)$,
and have $A_1$ assign these values an initial weight of $1/3$ and learning rate of $T^{-1/2}$.
This does not affect the asymptotics of our regret bound,
but ensures that we have regret $\Ot{0}$ whenever $H = \users$ or $H = \emptyset$.
\end{proof}

\end{document}